\title{Direct Policy Gradients: Direct Optimization of Policies in Discrete Action Spaces}
\author{
  Guy Lorberbom \\
  Technion
  \And
  Chris J. Maddison \\
  DeepMind
  \And
  Nicolas Heess \\
  DeepMind
  \And
  Tamir Hazan \\
  Technion
  \And
  Daniel Tarlow \\
  Google Research, Brain Team
}
\begin{document}
\maketitle

\begin{abstract}
Direct optimization \cite{mcallester2010direct, song2016training} is an appealing framework that replaces integration with optimization of a random objective for approximating gradients in models with discrete random variables \cite{lorberbom2018directopt}. 
A$^\star$ sampling \cite{maddison2014astar} is a framework for optimizing such random objectives over large spaces. 
We show how to combine these techniques to yield a reinforcement learning algorithm that approximates a policy gradient by finding trajectories that optimize a random objective.  
We call the resulting algorithms \emph{direct policy gradient} (DirPG) algorithms.
A main benefit of DirPG algorithms is that they allow the insertion of domain knowledge in the form of upper bounds on return-to-go at training time, like is used in heuristic search, while still directly computing a policy gradient.
We further analyze their properties, showing there are cases where DirPG has an exponentially larger probability of sampling informative gradients compared to REINFORCE.
We also show that there is a built-in variance reduction technique and that a parameter that was previously viewed as a numerical approximation can be interpreted as controlling risk sensitivity.
Empirically, we evaluate the effect of key degrees of freedom and show that the algorithm performs well in illustrative domains compared to baselines.
\end{abstract}

\section{Introduction}

Many problems in machine learning reduce to learning a probability distribution (or policy) over sequences of discrete actions so as to maximize a downstream utility function.
Examples include generating text sequences to maximize a task-specific metric like BLEU and generating action sequences in reinforcement learning (RL) to maximize expected return.
A main challenge is that evaluating the objective requires integrating over all possible sequences, which is intractable, and thus approximations like REINFORCE are needed \cite{williams1992simple} to learn these policies.

A line of work has emerged in recent years that allows replacing integration and sampling with optimization of noisy objective functions \cite{papandreou2011perturb,hazan2012partition,tarlow2012randoms,maddison2014astar,lorberbom2018directopt}.
While this does not immediately remove the intractability of the integration problem, casting the problem in terms of optimization gives access to a different toolbox of ideas, which can provide new perspectives and methods for these hard problems.
For example, Maddison et al. provide a new way of leveraging bounds from convex duality for use in sampling from continuous probability distributions \cite{maddison2014astar}.
Our aim in this work is the analog for reinforcement learning: we will replace the integral that is typically approximated by REINFORCE with an alternative that requires only optimization over a noisy objective function. 
The benefit is that this opens up techniques from heuristic search for use in reinforcement learning (e.g., variants of A$^\star$ search) and provides an opportunity to express domain knowledge, all while retaining the conceptual simplicity that comes from optimizing a standard expected return objective function.

The resulting algorithm is quite different from standard approaches to computing a policy gradient, but it estimates the same quantity up to one finite difference approximation.
We provide a comprehensive analysis of the new algorithm from both theoretical and empirical perspectives.
In total, this work provides a new perspective on computing a policy gradient and expands the toolbox of techniques and domain knowledge that can be used to tackle this fundamental problem.

\section{Background}
\label{sec:background}

\paragraph{Reinforcement learning.} We consider a standard problem of RL, in which an agent interacts with a Markov Decision Process (MDP) for a finite number of steps\footnote{Technically, everything in the paper works with an unbounded numbers of steps as long as trajectories terminate with probability 1, but we assume a maximum number of steps to simplify some parts of the exposition.} and attempts to maximize its accumulated reward. At any time $t \geq 0$ the environment is in state $s_t \in \states$ in the given state space $\states$; there is a fixed initial state $s_0 \in \states$. At each time $t$ the agent interacts with the environment by taking an action $a_t$ from a finite set of actions $a_t \in \actions$ according to a policy parameterized by $\theta \in \R^d$, 
    $\modelp{a_t \mid s_t}$.
The environment follows a transition distribution $p(r_t, s_{t+1} \mid s_t, a_t)$ over rewards $r_t$ and next states $s_{t+1}$ given previous state $s_t$ and action $a_t$. The agent interacts with the environment in this way for $T > 0$ steps generating a sequence of states $\statetrajectory = (s_1, \ldots, s_T)$, actions $\actiontrajectory = (a_0, \ldots, a_{T-1})$, and rewards $\rewardtrajectory = (r_0, \ldots, r_{T-1})$. This corresponds to the following generative model,
\begin{equation}
    \label{eq:mdp_genmodel} 
    \begin{aligned}
        a_t &\sim \modelp{\cdot \mid s_t} \text{ for } t \in \{0, \ldots, T-1\}\\
        r_t, s_{t+1} &\sim p(\cdot, \cdot \mid a_t, s_t) \text{ for } t \in \{0, \ldots, T-1\}\\
    \end{aligned}
\end{equation}
given $s_0 \in \states$. Taken together this defines 
the following joint distribution,
\begin{align}
    \label{eq:trajectorydist} p_{\theta}(\trajectorytriple) = \prod_{t=0}^{T-1} \modelp{a_t \mid s_t} p(r_t, s_{t+1} \mid s_t, a_t).
\end{align}
The sum of rewards  $r_t$ over an interaction is called the return, and the goal of the agent is to maximize the expected return over its policy parameters,
$\max_{\theta \in \R^d} \, \expect_{\trajectorytriple \sim p_{\theta}}\left[ \sum_{t=0}^{T-1} r_t \right]$. 

\paragraph{Policy gradients.} Policy gradient algorithms are a family of methods for optimizing expected return by estimating gradients. A common variant is REINFORCE \cite{williams1992simple}, which samples a trajectory $\trajectorytriple \sim p_{\theta}$, computes the return $R = \sum_{t=0}^{T-1} r_t$, and then approximates the gradient as $R \cdot \nabla_\theta \log p_{\theta}(\trajectorytriple)$.

\paragraph{Gumbel-max reparameterizations.} A random variable $G\sim \Gumbel(m)$ is Gumbel-distributed with location $m$ if $p(G \le g) = \exp(-\exp(-g + m))$.
The Gumbel-max trick is a way of casting sampling from a softmax as an $\argmax$ computation by using the fact that if $G(i)$ are drawn i.i.d.~as $\Gumbel(m_i)$, then $i^* = \argmax_{i} G(i) \sim \exp(m_i) / \sum_{i'} \exp (m_{i'})$. Moreover, $G^* = \max_{i} G(i) \sim \Gumbel(\log \sum_{i'} \exp m_{i'})$ and $i^*$ and $G^*$ are independent random variables. See \cite{gumbel1954statistical,maddison2014astar,maddison2017gumbel}.

\paragraph{Direct optimization. }
Direct optimization \cite{mcallester2010direct, song2016training, lorberbom2018directopt} approximates gradients of a loss function over discrete configurations that are computed as the $\argmax$ of a (possibly noisy) underlying potential function. 
Following \cite{lorberbom2018directopt} and letting $\trajectory$ be a discrete variable, $f_\theta$ be a scoring function, $\epsilon$ be an auxiliary variable, $G(\trajectory) \sim \Gumbel(0)$ be independent Gumbel noise, and $r$ be a negative loss function, the method is based on a \emph{direct} objective $D_\theta(\trajectory, G, \epsilon) = f_\theta(\trajectory) + G(\trajectory) + \epsilon \cdot r_{\trajectory}$.
The main result is that 
$\nabla_\theta \expect_{G} \left[r_{\argmax_{\trajectory} f_\theta(\trajectory) + G(\trajectory)}  \right] = \lim_{\epsilon \rightarrow 0} \frac{1}{\epsilon} \expect_{G} \left[ \nabla_\theta f_\theta(\argmax_{\trajectory} D_\theta(\trajectory, G, \epsilon)) - \nabla_\theta f_\theta(\argmax_{\trajectory} D_\theta(\trajectory, G, 0)) \right]$.



\section{Basic Algorithm, Motivating Example, and Summary of Results}
\label{sec:motivation}

To motivate the approach as simply as possible, we first present a minimal version of our new \emph{\ouralg~(\ouralgshort)} algorithm and an example where it has an exponentially larger probability of sampling an informative gradient compared to REINFORCE.
In later sections we will handle the full complexity of RL, justify correctness, and describe how to efficiently compute the needed quantities.

DirPG utilizes optimization to find an informative gradient that improves the reward of its policy. In contrast, REINFORCE samples from its current policy. This inherent difference can allow DirPG to find a policy gradient more efficiently than REINFORCE. In the following we formalize this difference by considering a simple environment where rewards $r_{\trajectory}$ are a function of action sequences $\trajectory \in \actions^T$ for a large action space $|\actions|^T$. Further suppose that we are in a sparse reward regime such that $r_{\othertrajectory} = m > 0$ for one trajectory $\othertrajectory$ and $r_{\trajectory} = 0$ for all others. 
The REINFORCE gradient is $r_{\trajectory} \nabla \log \modelp{\trajectory}$ where $\trajectory \sim \pi$.
Since $r_{\trajectory}$ is zero for most $\trajectory$,  $k$ samples from a uniform policy $\pitheta$ (like would arise at the start of learning) will result in a nonzero gradient with probability roughly $\frac{k}{|\actions|^T}$.

In this setting \ouralgshort~can be described as follows. Let $G(\trajectory) \sim \Gumbel(0)$ be independent Gumbel noise for each trajectory $\trajectory$ and $\epsilon$ a hyperparameter.
There are two trajectories of interest: 
\begin{align}
    \topt &= \argmax_{\trajectory} \left[ \log \modelp{\trajectory} + G(\trajectory) \right] \label{eq:simple_topt} \\
    \tdirect &= \argmax_{\trajectory} \left[ \log \modelp{\trajectory} + G(\trajectory) + \epsilon \cdot r_{\trajectory} \right].  \label{eq:simple_tdir}
\end{align}
The direct policy gradient is defined as 
\begin{align}
    \nabla_{\theta} \expect_{\trajectory \sim \pitheta} r_{\trajectory}
    & \approx \frac{1}{\epsilon}  \left[
    \nabla_\theta \log \modelp{\tdirect} - \nabla_\theta \log \modelp{\topt} \right]. \label{eq:simplified_dirpg}
\end{align}
A key benefit of \ouralgshort~is that domain knowledge may be inserted to guide a search for $\tdirect$. 
Suppose we have a powerful search heuristic that leads directly to the optimum. 
Then $\tdirect$ can be computed at the same cost as a single sample, and the total cost of an update requires only two samples (for $\topt$ and $\tdirect$), hence its computational complexity is equivalent to REINFORCE with $k=2$.
DirPG computes an informative (non-zero) gradient iff $\tdirect = \othertrajectory$ and $\topt \not = \othertrajectory$. The probability of $\topt \not = \othertrajectory$ is large ($1 - \frac{1}{|\actions|^T}$), so this mainly comes down to whether $\tdirect = \othertrajectory$, which is equivalent to the event
\begin{align}
    \log \modelp{\othertrajectory}  + G(\othertrajectory) + \epsilon \cdot m > \max_{\trajectory \not = \othertrajectory} \left[ \log \modelp{\trajectory} + G(\trajectory) \right].
\end{align}
When $\pitheta$ is uniform, this simplifies to 
$\epsilon \cdot m + G(\othertrajectory)  > \max_{\trajectory  \not = \othertrajectory} G(\trajectory)$,
which has the form of sampling $\othertrajectory$ via Gumbel-max. 
The RHS has distribution $\Gumbel(\log (|\actions|^T - 1))$ and thus the probability of sampling $\tdirect = \othertrajectory$ is 
$\frac{\exp(\epsilon \cdot m)}{\exp(\epsilon \cdot m) + \exp ( \log (|\actions|^T - 1) )}$. 
If $\epsilon$ scales logarithmically with $|A|^T$, then \ouralgshort~has an exponentially higher chance than REINFORCE to sample an informative gradient in this example.

This example motivates \ouralgshort~and also raises a number of questions. 
In the remainder, we provide a comprehensive analysis of the new algorithm.
Since the algorithm has many facets, we prioritize the following, leaving developing finely-tuned variants that outperform state of the art to future work:

{\bf Full complexity of RL. } We show how to handle general stochastic environments (\secref{sec:dirpg}).
\\[5pt] {\bf Correctness. } We show that \ouralgshort~computes a policy gradient up to a one-dimensional finite difference approximation that leads to the appearance of $\epsilon$ (\secref{sec:dirpg}).
\\[5pt] {\bf Utilizing existing heuristics. } We assumed above that a perfect heuristic enables computing $\tdirect$ at the cost of a single rollout. This elides an important detail, which is that the heuristic
must not only guide search to maximize return, it must also consider the $\log \pitheta + G$ terms in \eqref{eq:simple_tdir}. By extending A$^\star$ sampling, we show how to convert a heuristic over returns to a heuristic for computing $\tdirect$ (\secref{sec:topdown}).
\\[5pt] {\bf Approximate optimization. } With imperfect heuristics, exactly computing $\tdirect$ can be intractable. We define a notion of \emph{improvement} over $\topt$ and prove (in a restricted setting) that approximate optimization of $\tdirect$ still leads to learning an optimal policy (\appref{app:approximate_optimization}). 
\\[5pt] {\bf Epsilon. } Previous work on direct optimization \cite{mcallester2010direct} recognized that $\epsilon$ could be positive (``towards good'') or negative (``away from bad'') but did not provide a precise analysis of its impact. We provide a novel interpretation, deriving the objective optimized under different choices of $\epsilon$ and show there is a precise connection to risk-aware RL (\appref{app:risk}).
\\[5pt] {\bf Variance Reduction. } We show that \ouralgshort~``comes with its own variance reduction,'' by providing an interpretation of the $\nabla_\theta \log \modelp{\topt}$ term in \eqref{eq:simplified_dirpg} as a control variate (\appref{app:variance_reduction}).
\\[5pt] {\bf Empirical analysis. } We study all of the above in a set of carefully designed experiments that illustrate how to leverage the large literature on heuristic-guided search in specific domains, and the effect of key parameters like $\epsilon$ and the approximation of $\tdirect$ (\secref{sec:experiments}).

\section{Direct Policy Gradient}
\label{sec:dirpg}

We start by formalizing the full \ouralgshort~algorithm in a general stochastic RL environment. 
Note that there are two places where stochasticity enters into \eqref{eq:trajectorydist}:
via the agent's policy in the $\modelp{a_t \mid s_t}$ terms and via the environment
in the $p(r_t, s_{t+1} \mid s_t, a_t)$ terms.
Given this factorization, we can separately reparameterize them.
Once this is done, the direct optimization approach follows straightforwardly.
A key requirement of the learning update is that we can explore multiple trajectories for a given realization of environment noise, so the method requires a simulator in order to compute a gradient. However, the result of learning is a standard policy that can be sampled from without any search or simulator, so, e.g., it would be feasible to use in sim-to-real settings.

\paragraph{Reparameterization. } The learning rule for \ouralgshort~is based on search over trajectories and thus requires a simulator for computing a gradient.
Beyond that, we do not want to restrict the environments, so we consider a very general reparameterization, which is simply that there is some source of randomness $\staterewardtree$ that does not depend on $\trajectory$ such that there is a deterministic function mapping $\staterewardtree$ and a sequence of $(s_0, a_0, \ldots, s_t, a_t)$ to the next $r_t, s_{t+1}$ pair.
This implies, for example, that if $\staterewardtree$ is held fixed and an agent performs the same sequence of actions, then the same environment transitions and rewards will be produced. We denote the state (reward) resulting from a sequence of actions $\trajectory$ as $s_{\trajectory}$ ($r_{\trajectory})$. When clear from context, we omit the explicit dependence on $\staterewardtree$ for brevity.

Now it becomes straightforward to define a \emph{per-trajectory} Gumbel-max reparameterization. Let the total log probability that a policy assigns to a sequence of actions be
\begin{equation}
    \modelP{\actiontrajectory \mid \staterewardtree} = \prod_{t=0}^{T-1} \modelp{a_t \mid s_{(a_0 \ldots a_{t-1})}},
    \label{eq:trajectory_logprob}
\end{equation}
and let $\gumbels(\trajectory) \sim \Gumbel(0)$ for each trajectory $\trajectory$. 
This yields a trajectory-level Gumbel-max trick:
\begin{eqnarray}
    \Gtheta(\trajectory; \gumbels, \staterewardtree) &=& \log \modelP{\trajectory \mid \staterewardtree} + \gumbels(\trajectory) \\ 
\trajectory^* &=& 
            \argmax_{\trajectory}
            \Gtheta(\trajectory; \gumbels, \staterewardtree).
            \label{eq:G}
\end{eqnarray}

$\Gtheta$ are distributed as Gumbels with shifted locations and $\trajectory^*$ is a sample from \eqref{eq:trajectory_logprob}.

We emphasize that the reparameterization is equivalent to the standard RL formulation.
Specifically, let $P(\staterewardtree)$ be the distribution over $\staterewardtree$ resulting from different realizations of environment stochasticity and let the return of a trajectory $\trajectory$ be $R(\trajectory, \staterewardtree) = \sum_{t=0}^{T-1} r_{(a_0, \ldots, a_{t-1})}$. Then 
\begin{eqnarray}
    \expect_{\trajectorytriple \sim p_{\theta}}\left[ \sum_{t=0}^{T-1} r_t \right] 
    \;=\; \expect_{\staterewardtree \sim P}\left[ \expect_{\trajectory \sim \modelP{\cdot \mid \staterewardtree}} \left[ R(\trajectory, \staterewardtree)  \right]\right] 
    \;=\; \mathbb{E}_{\staterewardtree \sim P, \gumbels} \left[ R(\trajectory^*, \staterewardtree) \right].
\end{eqnarray}


\paragraph{Direct Policy Gradient.}
The above reparameterizations allow defining the general \ouralgshort~algorithm and showing its correctness. Define \emph{direct objective} $D_\theta$ and \emph{prediction generating function} $f$:
\begin{align}
\dobj(\trajectory; \gumbels, \staterewardtree, \epsilon) & = \Gtheta(\trajectory; \gumbels, \staterewardtree) + \epsilon R(\trajectory, \staterewardtree), \label{eq:direct_obj} \\
 f(\theta, \epsilon)   & = \mathbb{E}_{\staterewardtree \sim P, \gumbels} \left[ 
            \max_{\trajectory}
            \left\{
            \dobj(\trajectory; \gumbels, \staterewardtree,  \epsilon)
            \right\}
            \right],\\
 \tstar{\epsilon} & = \argmax_{\trajectory} \dobj(\trajectory;\gumbels, \staterewardtree, \epsilon). \label{eq:tstar}
\end{align}
When clear from context, we drop the explicit dependence on noise terms $\staterewardtree$ and $\gumbels$ for brevity. Differentiating $f$ with respect to $\epsilon$ and $\theta$ in either order and evaluating at $\epsilon=0$ yields the same value because $f$ is smooth \cite{lorberbom2018directopt} (or see \cite{song2016training} for an alternative proof):
\begin{equation}
\frac{\partial}{\partial \theta_i}
        \mathbb{E}\left[ 
        R(\trajectory^*(0), \staterewardtree)
            \right]
= \left.\frac{\partial^2 f(\theta, \epsilon)}{\partial \theta_i \partial \epsilon}  \right|_{\epsilon = 0} 
=
\left. \frac{\partial^2 f(\theta_i, \epsilon)}{\partial \epsilon \partial \theta_i} \right|_{\epsilon = 0} 
=
 \left.\frac{\partial}{\partial \epsilon} \mathbb{E}\left[ 
            \frac{\partial}{\partial \theta_i} \log \modelP{\tstar{\epsilon} \mid \staterewardtree}
            \right]\right|_{\epsilon = 0}.
            \label{eq:differentiate_second_way}    
\end{equation}
%

A finite-difference approximation in $\epsilon$ of the RHS of \eqref{eq:differentiate_second_way} yields the direct policy gradient (\ouralgshort):
\begin{align}
\label{eq:dlpg}
\nabla_\theta \expect_{\trajectorytriple \sim p_{\theta}}\left[ \sum_{t=0}^{T-1} r_t \right] & \approx
\frac{1}{\epsilon} \mathbb{E}_{\staterewardtree \sim P, \gumbels} \left[
    \nabla_\theta \log \modelP{\tstar{\epsilon} \mid \staterewardtree}
    - \nabla_\theta \log \modelP{\tstar{0} \mid \staterewardtree}\right]. 
\end{align}
Following terminology of \cite{mcallester2010direct} we name 
$\topt = \tstar{0}$ as the optimum in Eq.~\ref{eq:G},
and $\tdirect = \tstar{\epsilon}$ as the trajectory that defines the update direction. 
Because the LHS of \eqref{eq:differentiate_second_way} is the gradient of the expected return, \ouralgshort{} approaches the standard policy gradient as $\epsilon \to 0$.

Intuitively, \eqref{eq:tstar} reduces to \eqref{eq:G} when $\epsilon=0$, so $\topt$ is a trajectory sampled from the current policy.
$\tdirect$ is a trajectory that is close to a sample from the current policy but that has higher or lower return, where the strength and direction of this pull comes from the magnitude and sign of $\epsilon$. The gradient increases the probability of the better trajectory and decreases the worse.

\begin{wrapfigure}{r}{0.5\textwidth}
\vspace{-12pt}
\begin{minipage}[t]{.5\textwidth}
\begin{algorithm}[H]
{\footnotesize
    \centering
    \caption{{\footnotesize Direct Policy Gradient (General Form)}\label{alg:dpg}}
    \begin{algorithmic}[1]
        \State $\staterewardtree \sim P(\staterewardtree)$
        \State $\gumbels(\trajectory) \sim \Gumbel(0) \hbox{ for all $\trajectory$}$
        \State $\trajectoryGenerator = \TrajectoryGenerator(\staterewardtree, \gumbels, \epsilon)$
        \State $\topt, d_{opt} \gets \tdirect, d_{dir} \gets \trajectoryGenerator.\next()$
        \While{budget not exceeded}
            \State $\trajectory_{cur}, d_{cur} \gets \trajectoryGenerator.\next()$  
            \If {$d_{cur} > d_{dir}$}
                \State $\tdirect, d_{dir} \gets \trajectory_{cur}, d_{cur}$
                \If{terminate on first improvement}
                    \State break
                \EndIf 
            \EndIf
        \EndWhile
        \State {\bf return} $\frac{1}{\epsilon} \nabla_\theta \left[ \log \modelp{\tdirect \mid \staterewardtree} - \log \modelp{\topt \mid \staterewardtree} \right]$
    \end{algorithmic}
}
\end{algorithm}
\vspace{-10pt}
\end{minipage}
\end{wrapfigure}

\paragraph{Algorithms.}
The general form of algorithms we consider is given in Algorithm~\ref{alg:dpg}.
The basis is a $\TrajectoryGenerator$ (see \secref{sec:topdown}) that produces a stream of pairs of trajectories $\trajectory$ and associated direct objectives $\dobj(\trajectory; \epsilon)$.
The first step of Algorithm~\ref{alg:dpg} is to find $\topt$ and $d_{opt}=\dobj(\topt; 0)$ and initialize $\tdirect=\topt, d_{dir}=d_{opt}$.
The algorithms in \secref{sec:topdown} naturally produce $\topt$ and $d_{opt}$ as the first result, so we assume that behavior.
The algorithm then applies heuristic search to find a trajectory $\tdirect$ with direct objective $d_{dir}$ better than $d_{opt}$ (lines 5-13).
If no improvement is found before a budget is exceeded, 
then $\topt$ is equal to $\tdirect$ and the result of line 14 is a zero gradient.
Given enough budget and no early termination, the algorithm exactly implements \eqref{eq:dlpg}.
One variant is to terminate the search upon finding any improvement (line 9).
This automatically adapts the search budget as training progresses. 
At first it is easy to improve over $\topt$ (a sample from a random policy), 
but more search is needed after training for longer.
In \appref{app:approximate_optimization}, we prove that this variant
still learns an optimal policy (in a restricted setting).

\section{Generating trajectories using $A^\star$ sampling}
\label{sec:topdown}

 $A^\star$ sampling provides a starting point for computing $\topt$ and $\tdirect$, but
it is inefficient in its use of environment interactions. 
Here, we develop a new variant tailored to the RL setting that 
uses a lazier sampling strategy that minimizes the number of environment interactions. Despite $\topt$ being an argmax over $|\actions|^T$ trajectories, the algorithm produces an exact solution in $T$ steps.
Computing $\tdirect$ is more challenging, but \ouralgshort~can leverage heuristics to guide the search, and it benefits relative to REINFORCE by actively searching for an informative gradient.

\begin{wrapfigure}{r}{0.5\textwidth}
    \vspace{-14pt}
    \centering
    \begin{tabular}{cc}
         \includegraphics[width=.21\columnwidth]{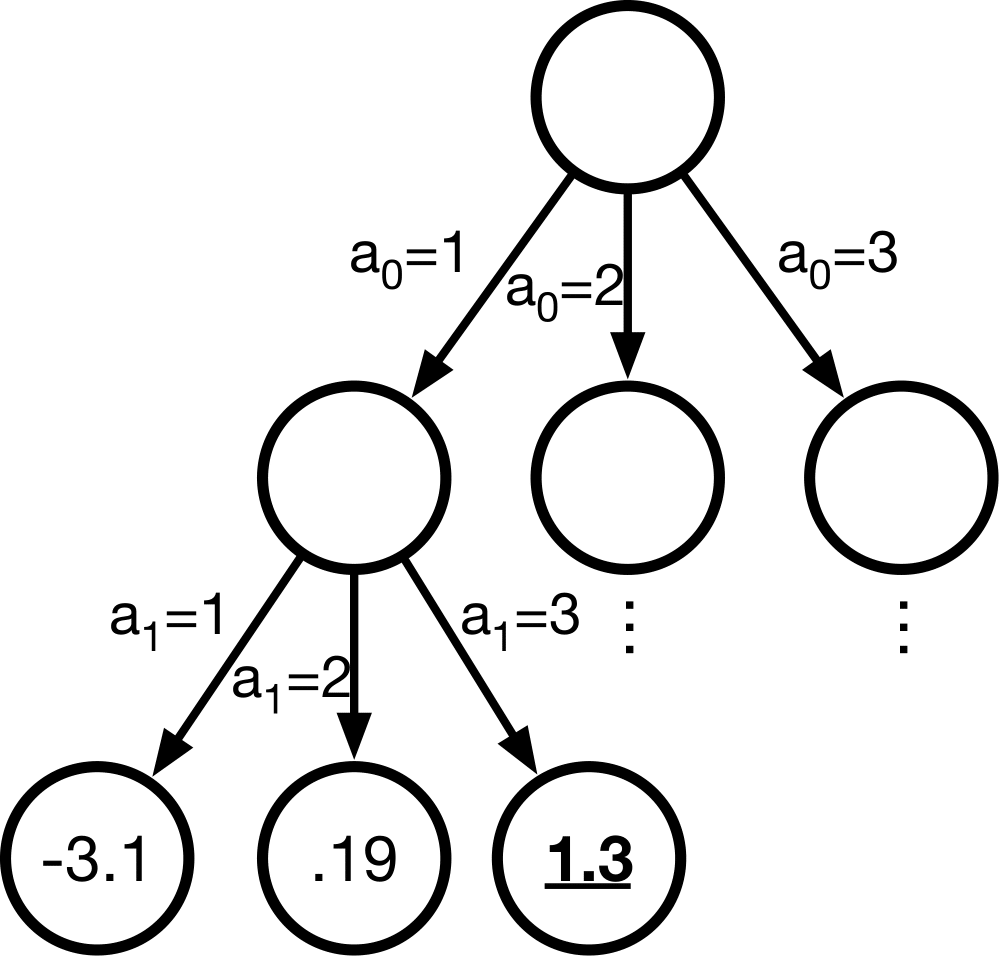}
         & \includegraphics[width=.18\columnwidth]{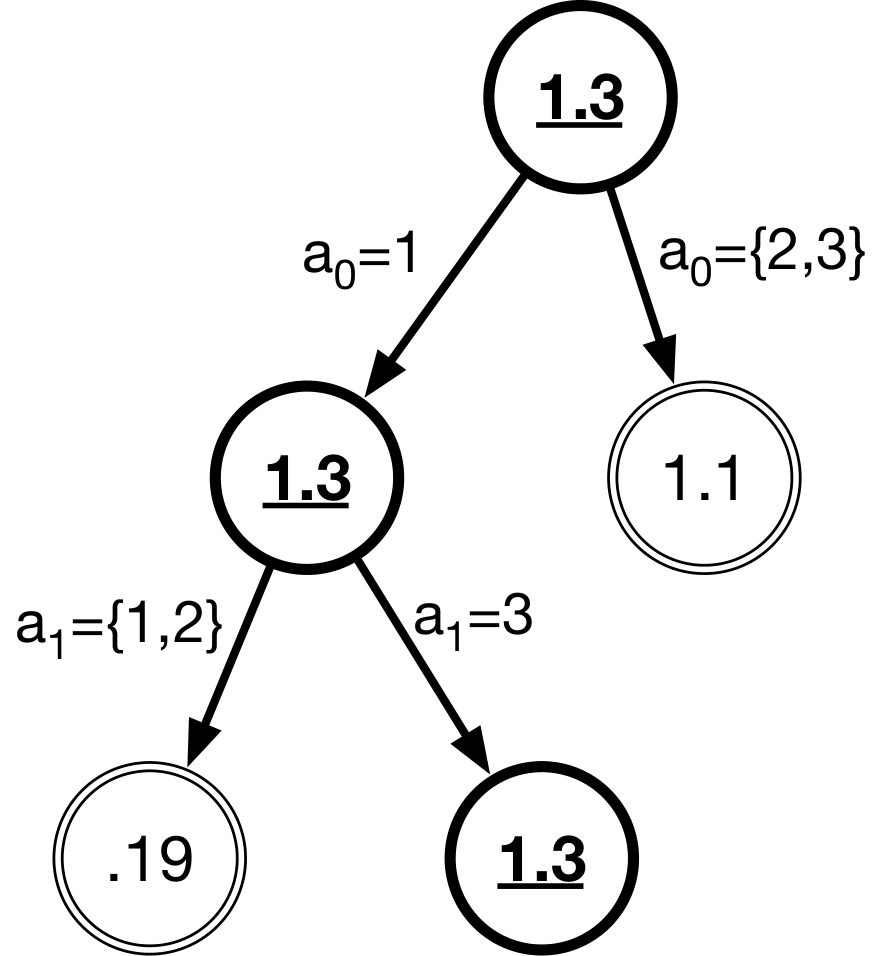} \\
         (a) {\footnotesize Gumbels for trajectories} &          \hspace{-10pt}
         (b) {\footnotesize Gumbels for regions}
    \end{tabular}
    \vspace{-2pt}
    \caption{
    {\footnotesize
    Example search tree and associated values. 
    {\bf (a)} Gumbel values $\Gtheta(\trajectory; \gumbels, \staterewardtree)$ associated with each trajectory $\trajectory$.
    The trajectory with maximum value (underlined) is $\topt$.
    {\bf (b)} State of the search tree after sampling $\topt$. 
    Nodes on the queue are drawn with double outline.
    }}
    \label{fig:state_reward_and_search_trees}
    \vspace{-8pt}
\end{wrapfigure}
\paragraph{Search Space.}
The search over $\staterewardtree$ for $\topt$ and $\tdirect$ is structured into a search tree over sets of action sequences that share a common prefix that we refer to as \emph{regions}. 
Region $\region(\prefix, \legalactions; \staterewardtree)$ is the set of trajectories that start with prefix 
$\prefix = (a_0, \ldots, a_{t-1})$ and then take a next action from $\legalactions \subseteq \actions$.
The root region $\region(\varnothing, \actions)$ is the set of all trajectories.
An example search tree is shown in \figref{fig:state_reward_and_search_trees} (b).
The root (top) is the set of all trajectories
and its right child is the set of trajectories $\left\{\trajectory : a_0 \in \{2, 3\} \right\}$.

A search queue is initialized with the root region, and then the search tree is repeatedly expanded by choosing a region $\region = \region(\prefix, \legalactions; \staterewardtree)$ from the queue 
and a next action $a_t \in \legalactions$. $\region$ is split into two child regions.
The first appends $a_t$ to the prefix and allows any next action to follow; i.e., $\region_1 = \region(\prefix \concat a_t, \actions)$ where $\concat$ denotes concatenation.
The second leaves the prefix unchanged and eliminates $a_t$ as a possible next action; i.e., $\region_2 = \region(\prefix, \legalactions \backslash \{a_t\})$.
If $s_{\prefix \concat a_t}$ is a terminal state then $\region_1$ contains a single
trajectory and is not expanded further. 
If $\legalactions \backslash \{a_t\}$ is empty, then $\region_2$ 
can be discarded.
An interaction with the environment is generated only for the first new region, and the resulting state is stored so that it can be re-used by all other nodes sharing the same prefix.
In \figref{fig:state_reward_and_search_trees} (b), the first split chose $a_0=1$
and created regions $\region_1 = \region((1), \actions)$ and $\region_2 = \region(\varnothing, \actions \backslash \{1\})$.

\paragraph{Optimal completions. } 
For any region $\region(\prefix, \legalactions; \staterewardtree)$ popped from the queue, it is possible to optimally complete it with respect to $G_\theta$ without any backtracking in the search.
That is, letting $\prefix = a_0, \ldots, a_t$, we can compute $\argmax_{a_{t+1}, \ldots, a_T \mid a_{t+1} \in \legalactions} G_\theta(\prefix \concat (a_{t+1}, \ldots, a_T); \Gamma, \staterewardtree)$ using only $T - t$ interactions with the environment.
The key idea
is to define random variables $G_\theta(\trajectory; \gumbels, \staterewardtree)$ not only for full trajectories $\trajectory$ but also for every region in the search tree.
The random variable for a region is assigned to be the max over $G_\theta$ of all trajectories in the region: 
$\Gtheta(\region; \gumbels, \staterewardtree) 
= \max_{\trajectory \in \region} \Gtheta(\trajectory; \gumbels, \staterewardtree)$.
Since the marginal distributions of the region random variables can be computed efficiently,\footnote{By properties of Gumbel distributions, the marginals are $\Gtheta(\region; \gumbels, \staterewardtree) \sim \Gumbel(\log \modelP{\region \mid \staterewardtree})$ where
$\modelP{\region \mid \staterewardtree} = \sum_{\trajectory \in \region} \modelP{\trajectory \mid \staterewardtree}$.
It can efficiently be computed by pushing the sum inwards through the shared prefix:
    $\modelP{\region(\prefix, \legalactions; \staterewardtree) \mid \staterewardtree} = \prod_{t'=0}^{t-1} \modelp{a_{t'} \mid s_{(a_0, \ldots, a_{t'-1})}} \sum_{a \in \legalactions} \modelp{a \mid s_{\prefix}}.$
    }
 the top-down algorithm \cite{maddison2014astar} can be applied to sample child region random variables conditional on the parents.    
By always following the search tree downwards towards the child with maximum $\Gtheta(\region; \gumbels, \staterewardtree)$, we descend straight to the optimal completion.
Notably, if we follow this strategy starting at the root region, we sample $\topt$ using only $T$ environment interactions.
%
%
\begin{wrapfigure}{r}{0.5\textwidth}
\begin{minipage}{.48\textwidth}
\begin{algorithm}[H]
{\footnotesize
    \centering
    \caption{Top-Down Sampling $\trajectory$}\label{alg:topdown}
    \begin{algorithmic}[1]
        \State{{\bf In:} environment $env$, actions $\actions$, $\epsilon$.}
        \State{{\bf Out:} Stream of $(\trajectory, D_\theta(\trajectory))$ pairs.}
        \State $Q, \staterewardtree \gets \hbox{Queue}, \StateRewardTree$
        \State $Q.\push(\varnothing, \actions, \Gumbel(0))$
        \While {$Q$ is not empty}
            \State $\prefix, \legalactions, G \gets Q.\pop()$ 
            \State $a \gets \hbox{Sample } \modelp{a \mid s_{\prefix}} 1\{a \in \mathcal{B}\}$ 
            \State $s_{\prefix \concat a}, r_{\prefix \concat a} \gets env.\step(a, s_{\prefix})$
            \If {$\legalactions \backslash \{a\}$ is not empty}
                \State $\mu \gets \log \modelP{\region(\prefix, \legalactions \backslash \{a\}) \mid \staterewardtree}$
                \State $G' \gets \TruncGumbel(\mu, G)$ 
                \State $Q.\push(\prefix, \legalactions \backslash \{a\}, G')$ 
            \EndIf
            \If {$s_{\prefix \concat a}$ is terminal}
                \State {\bf yield} $(\prefix \oplus a, G + \epsilon R(\prefix \oplus a, \staterewardtree))$ 
            \Else
                \State $Q.\push(\prefix \oplus a, \actions, G)$ 
            \EndIf
        \EndWhile
    \end{algorithmic}
}
\end{algorithm}
\end{minipage}
\vspace{-15pt}
\end{wrapfigure}

\paragraph{Top-down sampling of trajectories. }
Putting the above two sections together and simplifying expressions results in Algorithm~\ref{alg:topdown}, a new variant of top-down sampling.
Note that the algorithm produces an endless stream of $(\trajectory, D_\theta(\trajectory))$ pairs (line 15) and does not specify the order in which nodes are popped from the queue (various choices are discussed below).
The algorithm begins by sampling $\Gtheta(\region)$ for the root region $\region$ that contains all trajectories (line 4).
Line 6 pops a node from the queue and line 7 samples the action $a_t$ associated with the child region with maximum $\Gtheta$.
Line 8 queries the environment for the $s_{t+1}$ and $r_t$ that result from
taking $a_t$ as the next action, and the result is stored until $\staterewardtree$ is reset.
Then regions are divided as described above (line 17 corresponds to $\region_1$; lines 9-13 correspond to $\region_2$), and upon creation of new regions, their $\Gtheta$ values are sampled conditional upon the parent's $\Gtheta$ value (lines 11, 17).

If $Q$ is a priority queue with priority $\Gtheta(\region)$, then the algorithm will yield pairs in descending order of $\Gtheta(\trajectory)$, which also means that $\topt$ will be found after $T$ node expansions. We assume regions are prioritized this way until the first yield so that line 4 in Algorithm~\ref{alg:dpg} produces $(\topt, \dobj(\topt; \epsilon))$.
We are then free to change the priority function as in the next subsection and reorder the queue.
However if we do not, then this can  generate  ``Gumbel Top-K'' \cite{kool2019stochastic} by running Algorithm~\ref{alg:topdown} with priority $\Gtheta(\region)$ and return the first $K$ results.
Algorithm~\ref{alg:topdown} is better for RL than other $A^\star$ sampling algorithms \cite{maddison2014astar, kim2016exact}, because the others would roll-out an entire trajectory for each region expanded and thus make inefficient use of interactions with the environment. We expand on these details in \appref{app:further_astar_details}.
%



\paragraph{Searching for large $\dobj$ using A$^\star$ sampling. }
The final algorithm prioritizes regions on the queue using the return achieved so far and (if available) an upper bound on the return-to-go. 
It is the same as Algorithm~\ref{alg:topdown}, except before pushing a region on the queue (lines 4, 12, 17), we compute a priority for a region based on all the terms in \eqref{eq:direct_obj}.
Let $L(\region) = \sum_{t'=0}^{t-1} r_{(a_0, \ldots, a_{t'-1})}$ be the reward accumulated so far by the prefix and $U(\region) \ge \sum_{t'=t}^T r_{(a_0, \ldots, a_{t'-1})}$ be an upper bound on the return-to-go for any trajectory in region $\region$.
Recall the $\Gtheta(\region)$ computed during the search is the maximum $\Gtheta$ for any trajectory in the region.
We can then upper bound $\dobj(\region; \epsilon)=\max_{\trajectory \in \region} \dobj(\trajectory; \epsilon) \le \Gtheta(\region) + \epsilon \cdot (L(\region) + U(\region))$. We can also prune regions from the search if their upper bound is worse than $\dobj(\trajectory; \epsilon)$ for the best $\trajectory$ found so far.
Using the upper bound as a priority yields a stochastic version of A$^\star$ search (i.e., it is A$^\star$ Sampling).
In practice, there is a large literature on heuristic search methods that relax optimality guarantees of $A^\star$ search in order to arrive at good solutions faster (see, e.g., \cite{Pearl1981HeuristicST,hansen2007anytime}).
We have found benefit to adapting these methods to the search for $\tdirect$. 
In particular, we adapt static weighted $A^\star$ search \cite{pohl1970heuristic} to our setting by modifying the priority to be $\Gtheta(\region) + \epsilon \cdot (L(\region) + \alpha U(\region))$ for $0 \le \alpha < 1$, though we expect other methods to also be fruitful.


\section{Experiments}
\label{sec:experiments}


\paragraph{Combinatorial Bandits. }

We experiment with combinatorial bandits and compare \ouralgshort~to Upper Confidence Bound (UCB) algorithms \cite{auer2002using,cesa2012combinatorial}.
The environment is defined by a graph $G=(V, E)$ where $V = \{1, \ldots, n\}$ is the set of nodes and $E \subseteq V \times V$ is the set of undirected edges. 
For each edge $e \in E$  a parameter $\mu_e$ determines per-edge rewards as $r_e \sim \Uniform(0, 2 \mu_e)$.
An agent queries the environment with tree $\mathcal{T}$ and receives  reward $r_\mathcal{T} = \sum_{e \in \mathcal{T}} r_e$. Fresh realizations of $r_e$ are drawn for each episode.
UCB algorithms end an episode after a single interaction, while \ouralgshort~uses multiple interactions per episode (at the cost of seeing fewer realizations).
We compare to a "semi-bandit" version of UCB that observes more information (per-edge contributions to rewards) and a "full bandit" version that receives the same observations as \ouralgshort, the total reward $r_{\mathcal{T}}$ after producing a full tree. Note the similarity of the full bandit version to, e.g., CUCB \cite{chen2013combinatorial}.

To apply \ouralgshort, we let $\trajectory$ be a sequence of $|E|$ binary decisions of whether to include each edge in the spanning tree. Learnable parameters $\theta_e$ determine the probability of inclusion via $\sigma(\theta_e)$ where $\sigma$ is the sigmoid function.
The environment presents a legal set of actions at each step (see \appref{app:combinatorial_bandits} for details).
To compute $\tdirect$, we give a budget of 100 interactions and use priority $\Gtheta(\trajectory)$ in the search, enabling the early termination option in Algorithm~\ref{alg:dpg}.
\begin{figure*}
    \centering
    \begin{tabular}{ccc}
    \includegraphics[width=.23\textwidth]{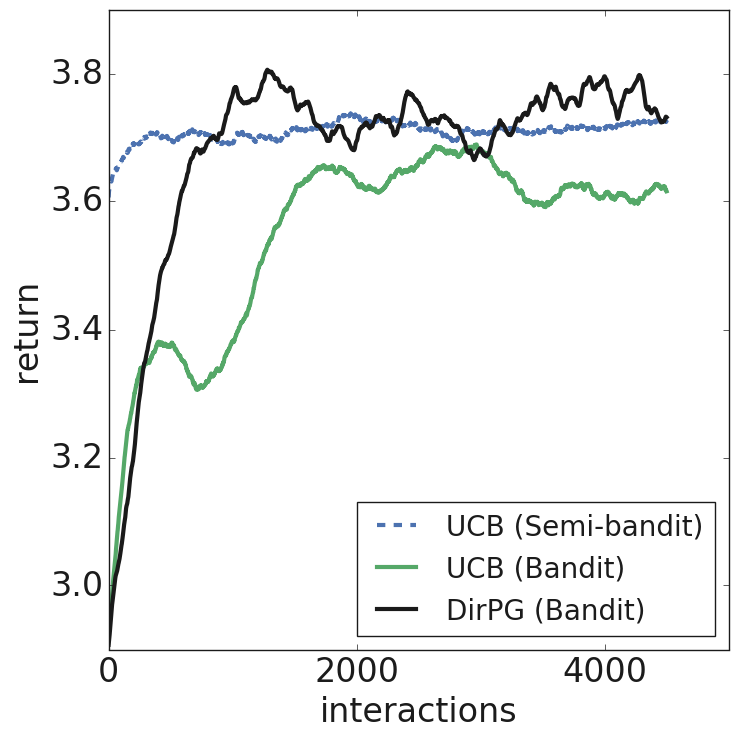} & 
    \includegraphics[width=.24\textwidth]{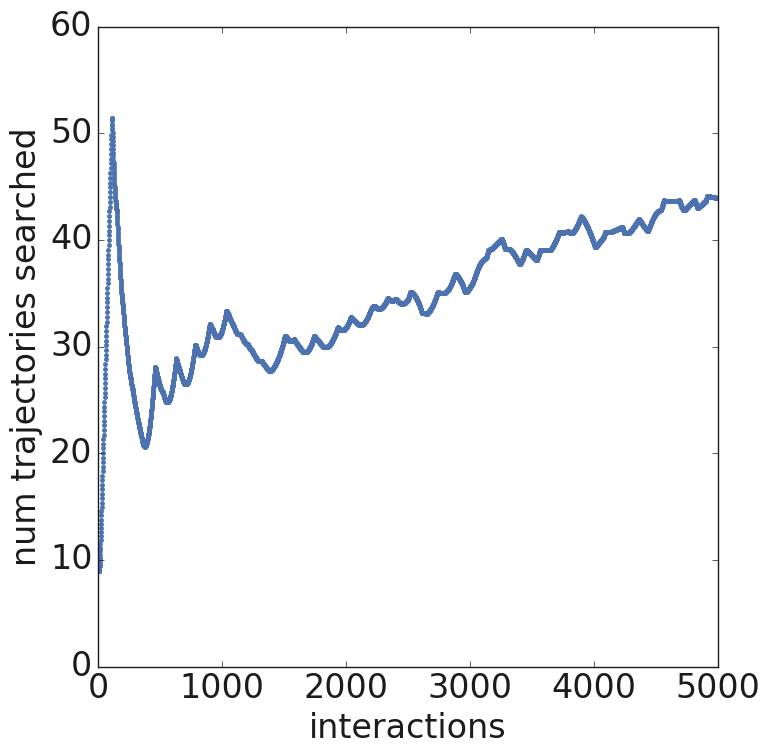} &
    \includegraphics[width=.25\textwidth]{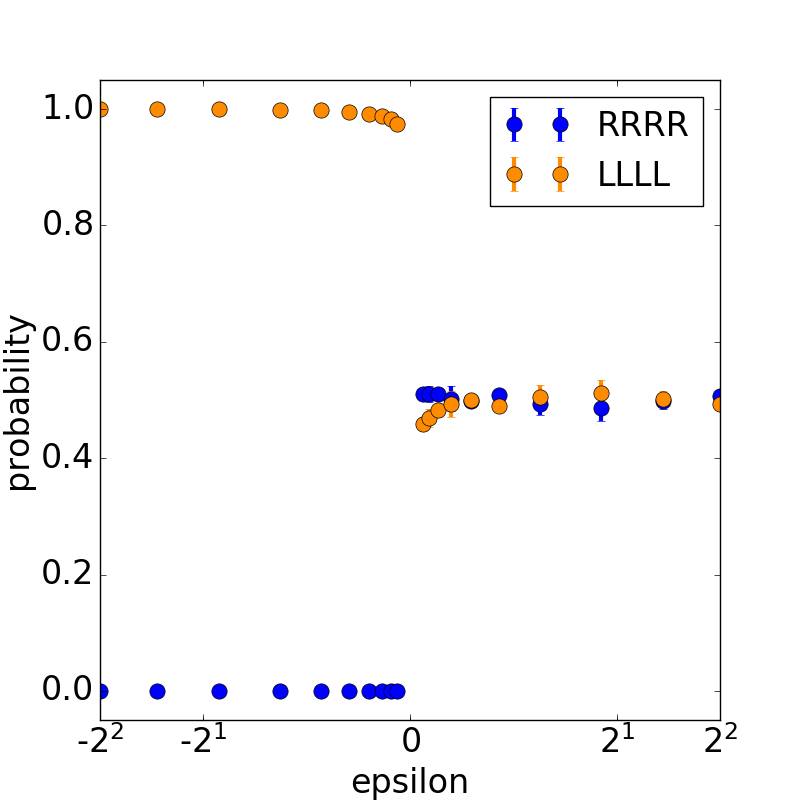} \\
    (a) & 
    (b) & 
    (c)
    \end{tabular}
    \vspace{-5pt}
    \caption{
    {
    Bandits and risk sensitivity. 
    (a) Average return vs \# of interactions.
    (b) Number of steps needed to find $\tdirect$.
    (c) DeepSea results showing learned $\Pi$(LLLL) (safe) and $\Pi$(RRRR) (risky) vs $\epsilon$.
    }}
    \label{fig:combbandits}
    \vspace{-10pt}
\end{figure*}
Results appear in Fig.~\ref{fig:combbandits} (a), which
shows the moving average return versus number of interactions, averaged over 10 runs. The \ouralgshort~curve is for samples of $\topt$, which is noisier due to there being fewer realizations.
\ouralgshort~is competitive with a UCB variant using more information, and it outperforms the comparable variant. Fig.~\ref{fig:combbandits} (b) shows the number of steps taken to find an improvement. Aside from initial noise due to the moving average, the number of interactions used in the search automatically grows as learning progresses.

\paragraph{DeepSea. }
Previously $\epsilon$ was considered a nuisance parameter, but we show that it controls an agent's preference for risk-seeking (positive $\epsilon$) versus risk-avoiding (negative $\epsilon$) behavior.
Analysis making this claim precise and a further experiment appears in \appref{app:risk}.

We use an adaptation of the DeepSea environment that was used by \cite{o2018variational} to study risk sensitivity.
The environment is a 5x5 grid where the agent starts from the top-left cell and the goal is in the bottom-right.
The agent has a choice of left (L) or right (R) at each step.
If the agent chooses L, it gets 0 reward and moves down and left. If it chooses R, it gets a reward sampled from ${\cal N}(1, 1)$ if transitioning to the bottom-right corner and otherwise $-\frac{1}{3}$.
This is interesting because any policy that is a mixture of LLLL and RRRR has optimal return (mixture of 0, ${\cal N}(0, 1)$ respectively), but the policies have different variance and thus we expect the choice of $\epsilon$ to affect what the agent learns.

In \figref{fig:combbandits} (c) we train policies with a range of $\epsilon$ values for $400,000$ episodes to ensure convergence and plot the probability assigned to trajectories LLLL and RRRR in the learned policy. 
For $\epsilon < 0$, most mass is put on LLLL, which has no variance and is thus favorable to a risk-avoiding agent.
For $\epsilon > 0$, mass is split evenly, which has highest ``controllable risk'' (see \appref{app:risk}). 

\paragraph{MiniGrid. }

In our final experiments we use the {\bf MiniGrid-MultiRoom-N6-v0} environment \cite{babyai_iclr19} to study how to prioritize nodes within the search for $\tdirect$.
MiniGrid is a partially observable grid-world where the agent observes an egocentric $7 \times 7$ grid around its current location and has the choice of 7 actions including moving right, left, forward, or toggling doors. We use environments of $25 \times 25$ grids with a series of 6 connected rooms separated by doors that need to be opened. Intermediate rewards are given for opening doors and reaching a final goal state. As baselines we compare to REINFORCE and the cross entropy method. In all of the methods we utilized the simulator to reset the environment so that multiple trajectories could be sampled starting from the same environment seed.
In all cases, we use a total of 3000 interactions per environment seed (episode). In our method, we use 100 interactions to sample $\topt$ (the trajectory length) and 2900 interactions to search for $\tdirect$. In REINFORCE and in the cross entropy method we sample 30 independent trajectories, where each is 100 interactions long.
Details on their implementation are in \appref{app:minigrid}.
\begin{figure*}
    \centering
    \begin{tabular}{ccc}
         \hspace{-5pt}
         \includegraphics[width=.3\textwidth]{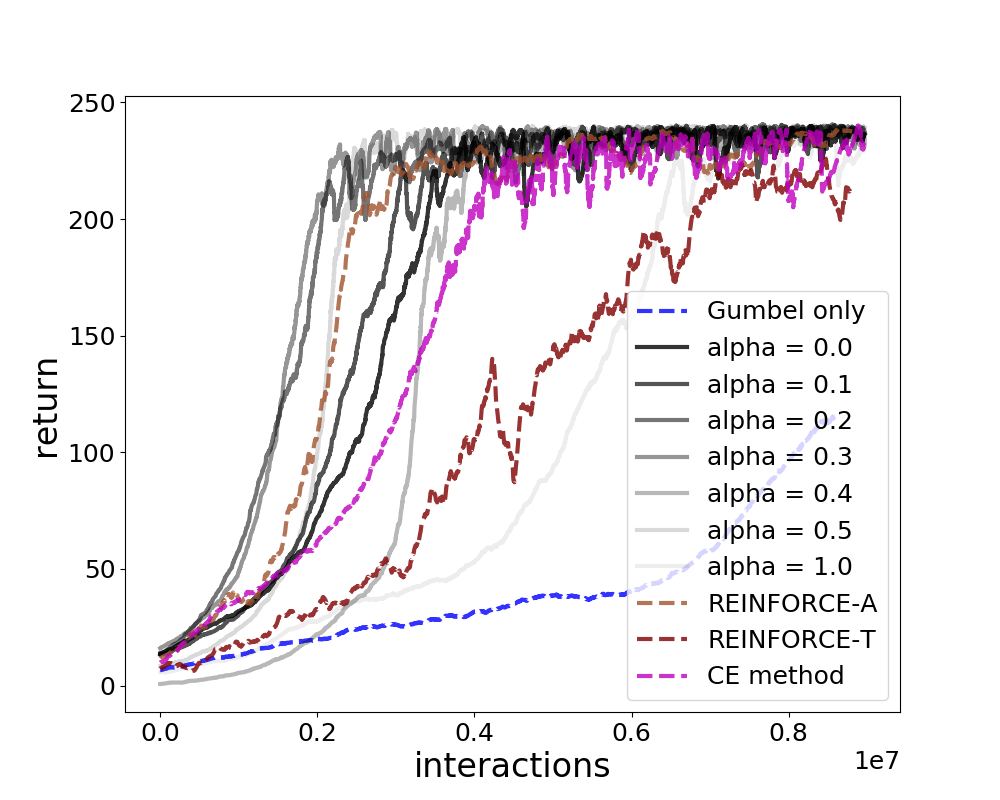} &  
         \hspace{-15pt}
         \includegraphics[width=.3\textwidth]{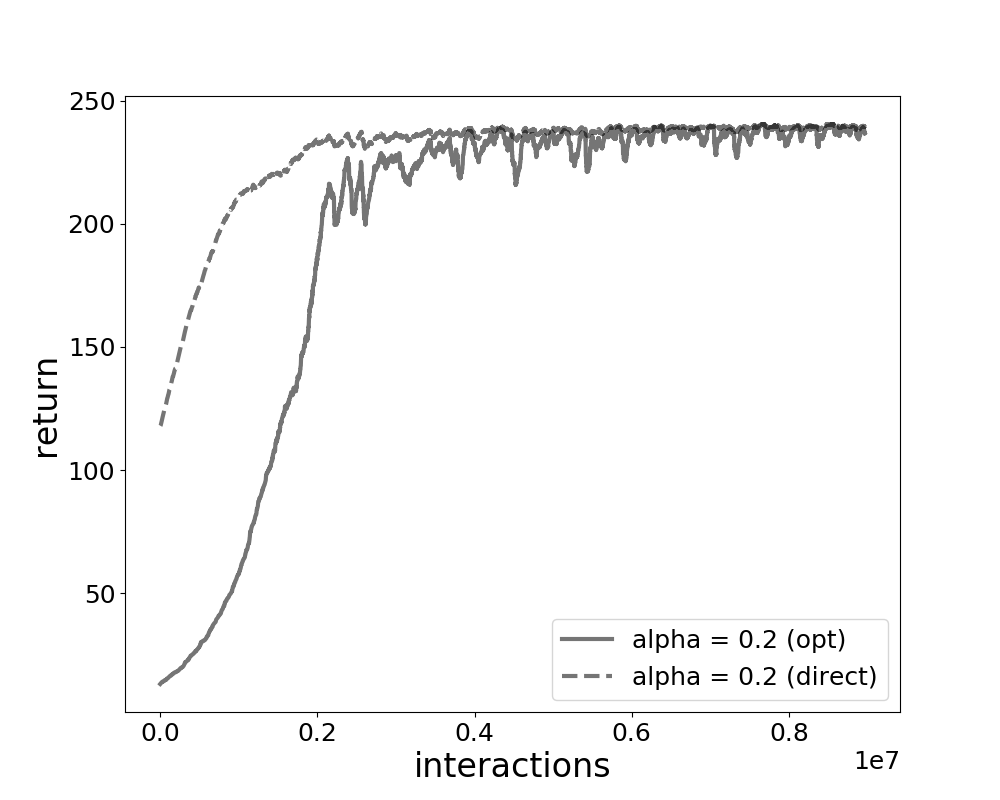} &
         \hspace{-15pt}
         \includegraphics[width=.3\textwidth]{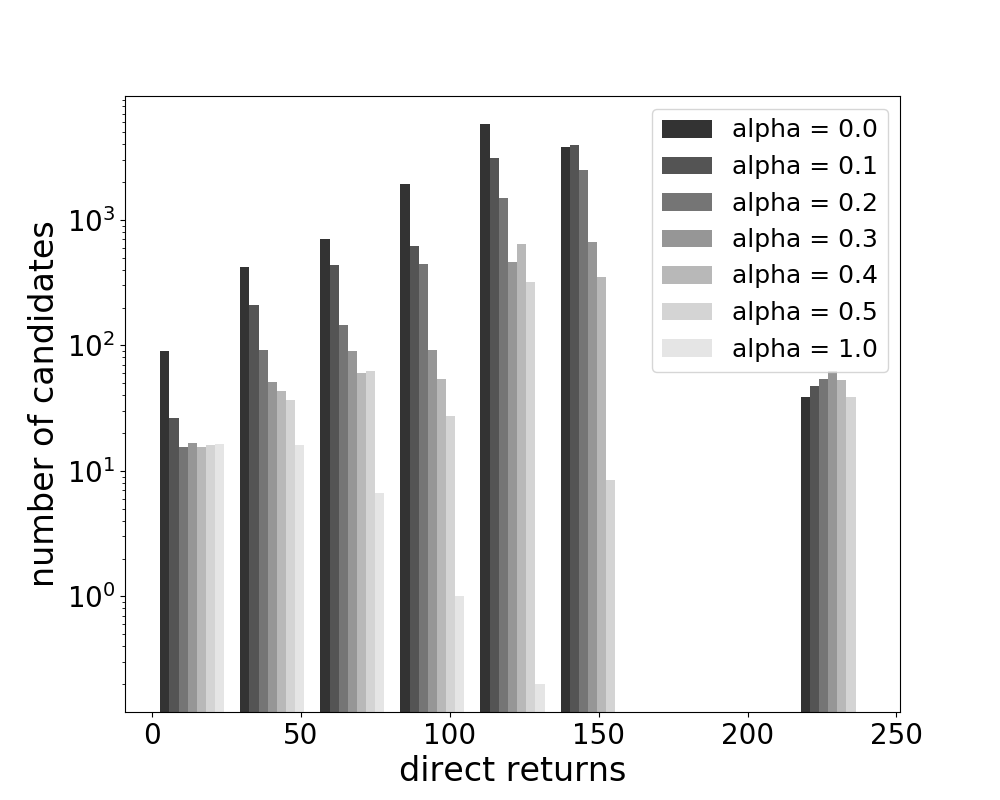}
         \\
         (a) & 
         (b) & 
         (c)
    \end{tabular}
    \vspace{-5pt}
    \caption{{\footnotesize Minigrid results.
    (a) Return vs number of interactions.
    (b) Direct objective of $\tdirect$ and $\topt$ vs iteration.
    (c) Histograms showing quality-vs-quantity tradeoff for various search priorities.
}}
\vspace{-10pt}
    \label{fig:minigrid1}
\end{figure*}

We explore variations on how to set the priority of nodes in the search for $\tdirect$.
First, in the ``Gumbel only'' priority, we use just $\Gtheta(\region)$ as a region's priority.
In the others, we use $\Gtheta(\region; \staterewardtree, g) + \epsilon (L(\region) + \alpha U(\region))$,
where $U$ is based on the Manhattan distance to the goal and the number of unopened doors.
Setting $\alpha=0$ trades off enumerating by descending order of $\Gtheta(\region; \staterewardtree, g)$ with favoring prefixes that have already achieved high return.
Setting $\alpha=1$ yields A$^\star$ search.
\figref{fig:minigrid1} (a) shows average return versus training episode.
$\alpha=0$ provides good results, and increasing $\alpha$ up to $\alpha=0.3$ gives improved performance.
Beyond that, performance degrades, with $\alpha=1$ performing worst.

To better understand this, we partially trained a model for 1.2M interactions
and then froze the parameters and ran several searches
for the same number of interactions but with different priority functions.
\figref{fig:minigrid1} (c) shows the results.
For smaller $\alpha$, more trajectories are finished to completion but the returns achieved are worse.
As $\alpha$ increases, fewer full trajectories are found but they have better returns, but past $\alpha = 0.4$ not enough full trajectories are found, and both the quality and the quantity shrink.
Thus, setting $\alpha$ too high leads to ``breadth-first behavior'' where too much time is spent exploring prefixes and not completing trajectories.
In \figref{fig:minigrid1} (b), we show the relationship between $\dobj(\tdirect)$ and $\dobj(\topt)$ over the course of learning.
This shows that $\tdirect$ does not need to find a trajectory with the optimal return in order to provide signal for the policy to improve.

\section{Related Work}
\label{sec:related_work}

Similarities can be drawn to the body of work casting RL as probabilistic inference, in particular in Expectation-Maximization (EM) Policy Search methods
\cite{Peters10,toussaint2006probabilistic,rawlik12,levine2013variational,levine2016end,montgomery2016guided,chebotar2016path,abdolmaleki2018maximum, buesing2018woulda}. Broadly, these methods alternate a step akin to posterior inference that improves a trajectory distribution with an update to the policy parameters using in an EM formulation.
In this context our work would be most similar to an incremental variant \cite{neal1998view} of Monte Carlo EM \cite{levine2001implementations}, 
though \ouralgshort~has significant differences, including the use of A$^\star$ sampling to guide the sampling and the use of direct optimization, which can be interpreted as a variance reduction strategy. 
We discuss this in detail in \appref{app:further_analysis}.

The initial \ouralgshort~reparameterization is similar to \cite{heess2015learning}, but the setting and approach are very different.
The most prominent example of search in RL is Monte Carlo Tree Search (MCTS) \cite{kocsis2006bandit,browne2012survey}.
MCTS is quite different because---unlike \ouralgshort---it uses search and a simulator at test time, but it becomes closer when search results are distilled into a policy as in \cite{silver2018general}. However, we are not aware of results showing that MCTS can be used to directly compute a policy gradient. 
One can imagine an MCTS-style algorithm that explores $k$ trajectories under a fixed realization of environment noise and chooses the one with highest return, then distills into a policy via a gradient update to increase the probability assigned to the chosen trajectory. As $k \rightarrow \infty$, this will approach the optimal \ouralgshort~update with $\epsilon \rightarrow \infty$. Bbased on risk sensitivity results in \appref{app:risk}, we can see that this algorithm will be very risk-seeking. Thus, \ouralgshort~offers a degree of control via $\epsilon$ that isn't available to this MCTS-style counterpart.

Another related use of search trees is the \emph{vine} method from \cite{schulman2015trust}, which leverages a simulator's ability to reset to previous states to construct a tree over trajectories. Multiple roll-outs are created from tree nodes, and common random numbers are used across the roll-outs to reduce variance.

\section{Discussion}

We have presented a new method for computing a policy gradient and studied its properties from theoretical and empirical perspectives.
This also provides new understandings of direct loss optimization in terms of variance reduction and risk-sensitivity. 
One limitation is that in its current form, the algorithm only learns in an episodic framework and from complete trajectories. 
We are currently exploring how this limitation could be removed.
Our experiments so far have been geared towards understanding the algorithm and its important degrees of freedom.
We are eager to take these learnings and apply them to real-world applications where search and heuristics (upper bounds) have traditionally been successful like navigation, combinatorial optimization, and program synthesis.
\section*{Broader Impact}
This work presents a general theoretical and algorithmic contribution to reinforcement learning (RL) research.
One contribution (\appref{app:risk}) is an analysis of the risk-sensitive behavior of the algorithm as parameter $\epsilon$ is varied. 
This provides an axis of control beyond simply maximizing expected future reward, which is likely a beneficial analysis to perform (though far-removed from well-defined impacts).
We'll refrain from commenting on the future societal consequences of general advances in RL research, because this work is more theoretical and conceptual in nature, and it is a complex topic that is better covered in the context of work that is closer to specific impacts.

\begin{ack}
\end{ack}

\bibliographystyle{plain}
\bibliography{references}

\clearpage
\newpage
\appendix
\section{Further Analysis}
\label{app:further_analysis}

In this section we broaden understanding of the \ouralgshort{} update by developing an alternate interpretation of \ouralgshort{} as the gradient of some other function, which we discovered by reverse-engineering
the update. This provides insight into the precise effect of $\epsilon$, provides an interpretation of \ouralgshort{} as having a built-in control variate, and allows relating the algorithm to other areas of reinforcement learning.

\subsection{Reverse Engineering an Objective Function}
The final objective that we arrived at via reverse engineering is
\begin{align}
l(\theta, \epsilon) &= \expect_{\staterewardtree \sim P}\left[\frac{1}{\epsilon}  \log\left(\expect_{\trajectory \sim \modelP{\cdot \mid \staterewardtree}}\left[ \exp(\epsilon R(\trajectory, \staterewardtree)) \right]\right)\right]. \label{eq:marginal_logprob}
\end{align}
Here we show that differentiating it indeed leads to the \ouralgshort{} update.
To derive the \ouralgshort{} update, first divide by 1:
\begin{align}
l(\theta, \epsilon) = \frac{1}{\epsilon} \expect_{\staterewardtree \sim P} \left[\log \frac{\sum_{\trajectory} \exp \left\{ \log \modelP{\trajectory \mid \staterewardtree} + \epsilon R(\trajectory, \staterewardtree) \right\} }
                               {\sum_{\trajectory} \exp \left\{ \log \modelP{\trajectory \mid \staterewardtree} \right\}} \right], \label{eq:marginal_logprob_div1}
\end{align}
and then differentiate to get
\begin{align}
\nabla_{\theta} l(\theta, \epsilon) &=  \frac{1}{\epsilon} \mathbb{E}_{\staterewardtree \sim P} \left[ \expect_{\trajectory \sim P_R(\cdot \mid S)} \left[ \nabla_\theta \log \modelP{\trajectory \mid \staterewardtree} \right]  \right]
-  \frac{1}{\epsilon}  \mathbb{E}_{\staterewardtree \sim P} \mathbb{E}_{ \trajectory \sim \modelP{\cdot \mid \staterewardtree}} \left[ \nabla_\theta \log \modelP{\trajectory \mid \staterewardtree} \right]. \label{eq:dlpg_alt}
\end{align}
where $P_R(\actiontrajectory \mid \staterewardtree) \propto \modelP{\actiontrajectory \mid \staterewardtree} \exp(\epsilon R(\actiontrajectory, \staterewardtree))$.
Now we can reparameterize the expectations in \eqref{eq:dlpg_alt} using Gumbel-max and express the samples in terms of \eqref{eq:tstar}:
\begin{align}
&= \frac{1}{\epsilon} \expect_{\staterewardtree \sim P} \left[ \mathbb{E}_{\gumbels} \left[ \nabla_\theta \log \modelP{\trajectory^*(\epsilon) \mid \staterewardtree} \right]  \right] - \frac{1}{\epsilon} \expect_{\staterewardtree \sim P} \left[  \mathbb{E}_{\gumbels}  \left[ \nabla_\theta \log \modelP{\trajectory^*(0) \mid \staterewardtree} \right] \right]. \label{eq:reparamerized_control_variate}
\end{align}
Having expressed both expectations in terms of Gumbel noise $\gumbels$ with the same distribution, we can use common random numbers to recover the direct policy gradient:
\begin{align}
& =
\frac{1}{\epsilon} \mathbb{E}_{\staterewardtree \sim P, \gumbels} \left[
    \nabla_\theta \log \modelP{\tstar{\epsilon} \mid \staterewardtree}
    - \nabla_\theta \log \modelP{\tstar{0} \mid \staterewardtree}\right]. \label{eq:dlpg_app}
\end{align}
 The final result is the \ouralgshort{} gradient, and note that there are no approximate equalities here: \eqref{eq:marginal_logprob} is in some sense the underlying objective that \ouralgshort{} optimizes when $\epsilon$ is treated as a hyperparameter.

\subsection{Control Variate Interpretation. }
\label{app:variance_reduction}
The $\mathbb{E}_{ \trajectory \sim \modelP{\cdot \mid \staterewardtree}} \left[ \nabla_\theta \log \modelP{\trajectory \mid \staterewardtree} \right]$ term of \eqref{eq:dlpg_alt} is the expected value of a score function and thus is identically equal to $\boldsymbol{0}$. 
There would be no benefit of including the term in \eqref{eq:reparamerized_control_variate}.
The benefit of including it only becomes apparent in \eqref{eq:dlpg_app}, where we can interpret it as a control variate.
The optimization problems that define $\tdirect$ and $\topt$ differ only in value of $\epsilon$, so for small $\epsilon$ we expect the solutions to have similar features and correlated score functions.
When this is the case, control variates reduce the variance of the overall gradient estimate. 
To our knowledge, direct optimization has not previously be understood in these terms.

\paragraph{Further experiment on effect of control variate on variance. } 
We measured the variance of DirPG updates with and without the control variate term of Eq.~\ref{eq:reparamerized_control_variate} during training on MiniGrid. See \figref{fig:ctrl_var}.  The control variate reduces variance, particularly later in training, when $\boldsymbol{a}_{opt}$ is better correlated with the reward function.
\begin{figure}
    \centering
    \includegraphics[width=.4 \linewidth]{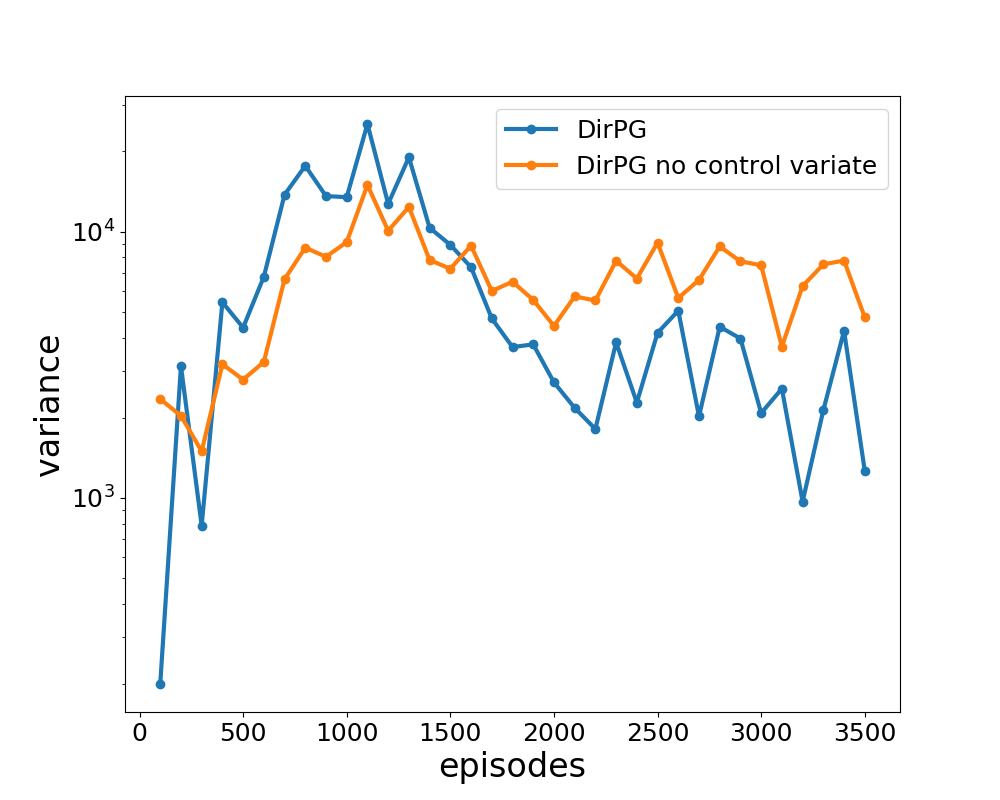}
    \caption{Total empirical variance of the DirPG update as a function of the number of training episodes on MiniGrid.}
    \label{fig:ctrl_var}
\end{figure}

\subsection{Risk Sensitivity}
\label{app:risk}

\subsubsection{Relation to Risk-Sensitive Control}
The objective \eqref{eq:marginal_logprob} is closely related to a classical objective in risk-sensitive control \cite{pratt1964risk,howard1972risk,coraluppi1997optimal},
$\log\expect\left[ \exp(\epsilon R(\trajectory, \staterewardtree)) \right]/ \epsilon$. For $\epsilon > 0$, optimal policies under the classical objective prefer high risk strategies as long as high rewards have some positive probability. For $\epsilon < 0$, optimal policies prefer low risk strategies that avoid placing probability on low rewards.  \eqref{eq:marginal_logprob} has an important difference. Following \cite{coraluppi1997optimal,maddison2017particle}, we take a Taylor  expansion of $\exp(t)$ and $\log(1+t)$ at $t=0$ to get
\begin{align}
l(\theta, \epsilon) &= 
\expect_{\staterewardtree \sim P, \trajectory \sim \modelP{\cdot \mid \staterewardtree}}[R(\trajectory, \staterewardtree)] 
+ \frac{\epsilon}{2} \expect_{\staterewardtree \sim P} [\mathrm{var}_{\trajectory \sim \modelP{\cdot \mid \staterewardtree}}(R(\trajectory, \staterewardtree))]   + \mathcal{O}(\epsilon^2), \label{eq:risk_sensitivity_taylor}
\end{align}
where we use the notation $\mathrm{var}_{\trajectory \sim \modelP{\cdot \mid \staterewardtree}}(R(\trajectory, \staterewardtree))$ to mean the conditional variance of $R(\trajectory, \staterewardtree)$ given $\staterewardtree$. 
Note that expected conditional variance is not equal to the joint variance, which makes this objective different from the typical risk-sensitive analysis. If the second term were simply the variance under the joint, then the agent is sensitive to variance in return regardless of whether it was due to stochasticity in the environment or in the policy. In \eqref{eq:risk_sensitivity_taylor}, we see that the agent only seeks out or suppresses ``controllable risk,'' which is variance in return created due to stochasticity in its policy.

\paragraph{Further experiment on ``controllable risk.'' }
To further illustrate this, we used numerical integration to compute \eqref{eq:marginal_logprob} for a simplified ``Gaussian choice'' setting where an agent chooses to take a reward sampled from $\mathcal{N}(0, 1)$ with probability $p$ and 0 reward with probability $1 - p$. \figref{fig:quadrature_app} shows that the risk-seeking objective favors ``controllable risk'' created due to stochasticity in the agent's policy but not variance created due to stochasticity in the environment.

\begin{figure*}
    \centering
    \includegraphics[width=.35\textwidth]{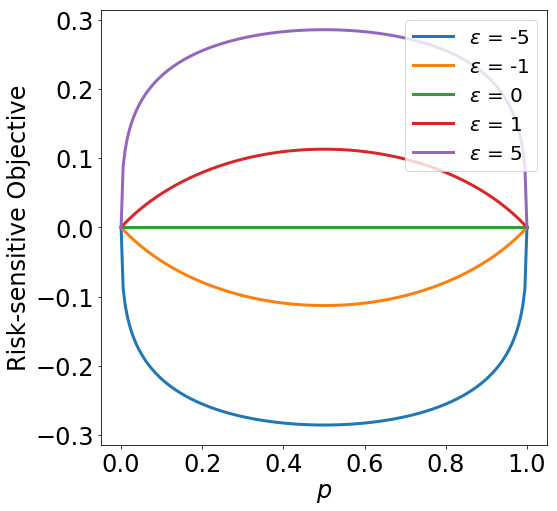} \\
    \caption{
    Quadrature evaluation of \eqref{eq:marginal_logprob} for the Gaussian choice problem for varying $\epsilon$.
    }
    \label{fig:quadrature_app}
\end{figure*}
\section{Approximate Optimization of $\tdirect$}
\label{app:approximate_optimization}

\paragraph{Proof of Correctness of Gumbel-approx-max in Deterministic Multi-armed Bandits}

Suppose we have $N$ arms, each with a fixed but unknown reward $R(i)$ and that arms are ordered 
according to their reward so $R(i) > R(j)$ iff $i > j$, and $\epsilon > 0$. Let the following:
\begin{itemize}
    \item $\modelp{i} \propto \exp \theta_i$ be the probability of arm $i$ under a softmax policy parameterized by $\theta$,
    \item $\Gtheta(i) \sim \Gumbel(\theta_i)$
    \item $\dobj(i, \epsilon) = \Gtheta(i) + \epsilon R(i)$ be the direct objective
    \item $i_{opt} = \argmax_i \dobj(i, 0)$
    \item $i_{dir} = \argmax_i \dobj(i, \epsilon)$
\end{itemize}
Finally, let $i_{approx}$ be the value of $i_{direct}$ arising from running Algorithm~\ref{alg:dpg} using $\Gtheta(i)$ as priority.
That is, we iterate over $i$ in descending order of $\Gtheta(i)$ until we find an $i$ such that $\dobj(i, \epsilon) > \dobj(i_{opt}, \epsilon)$
or we have enumerated all $i$, in which case we set $i_{approx} = i_{opt}$.

We prove that learning using $i_{approx}$ in place of $i_{dir}$ still leads to learning the optimal policy.

\begin{lemma}
$i_{direct} \geq i_{approx} \geq i_{opt}$.
\end{lemma}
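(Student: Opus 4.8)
The plan is to establish the two inequalities $i_{approx} \geq i_{opt}$ and $i_{direct} \geq i_{approx}$ separately, using only elementary facts about the argmax definitions and the monotone ordering of the arms. The key structural fact to exploit is that the algorithm iterates over arms in descending order of $\Gtheta(i)$ and stops at the first arm whose direct objective $\dobj(i,\epsilon)$ strictly exceeds $\dobj(i_{opt},\epsilon)$ (otherwise returning $i_{opt}$ itself).

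First I would handle $i_{approx} \geq i_{opt}$. If the search enumerates all arms without finding an improvement, then by definition $i_{approx} = i_{opt}$ and the inequality holds with equality. Otherwise, $i_{approx}$ is the first arm $i$ in the $\Gtheta$-descending order with $\dobj(i,\epsilon) > \dobj(i_{opt},\epsilon)$, i.e. $\Gtheta(i_{approx}) + \epsilon R(i_{approx}) > \Gtheta(i_{opt}) + \epsilon R(i_{opt})$. But because $i_{approx}$ is reached before $i_{opt}$ would be (or at least no later) in descending $\Gtheta$ order — and more directly, because $i_{opt} = \argmax_i \Gtheta(i)$ already maximizes $\dobj(\cdot,0) = \Gtheta(\cdot)$ — we have $\Gtheta(i_{opt}) \geq \Gtheta(i_{approx})$. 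Combining these two, $\epsilon R(i_{approx}) > \epsilon R(i_{opt})$, and since $\epsilon > 0$ this gives $R(i_{approx}) > R(i_{opt})$, hence $i_{approx} > i_{opt}$ by the monotone ordering of arms. In all cases $i_{approx} \geq i_{opt}$.

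Next I would handle $i_{direct} \geq i_{approx}$. Since $i_{direct} = \argmax_i \dobj(i,\epsilon)$, we have $\dobj(i_{direct},\epsilon) \geq \dobj(i_{approx},\epsilon)$, i.e. $\Gtheta(i_{direct}) + \epsilon R(i_{direct}) \geq \Gtheta(i_{approx}) + \epsilon R(i_{approx})$. To turn this into an ordering statement I want to also compare the Gumbel terms. Here the crucial observation is that $i_{approx}$ is the \emph{first} arm in descending $\Gtheta$-order that beats $i_{opt}$'s direct objective; and $i_{direct}$, being the maximizer of $\dobj(\cdot,\epsilon)$, certainly satisfies $\dobj(i_{direct},\epsilon) \geq \dobj(i_{opt},\epsilon)$ as well, so $i_{direct}$ is \emph{some} arm beating $i_{opt}$. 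Since $i_{approx}$ is the one with the largest $\Gtheta$ value among all such arms (it is encountered first in descending order), $\Gtheta(i_{approx}) \geq \Gtheta(i_{direct})$. Plugging this into the inequality $\Gtheta(i_{direct}) + \epsilon R(i_{direct}) \geq \Gtheta(i_{approx}) + \epsilon R(i_{approx})$ yields $\epsilon R(i_{direct}) \geq \epsilon R(i_{approx})$, so $R(i_{direct}) \geq R(i_{approx})$ and therefore $i_{direct} \geq i_{approx}$ by the arm ordering. (The edge case $i_{approx} = i_{opt}$ is consistent: then $i_{direct} \geq i_{opt} = i_{approx}$ follows since $i_{direct}$ maximizes $\dobj(\cdot,\epsilon)$ and $R$ is increasing.)

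The main obstacle I anticipate is making the phrase ``$i_{approx}$ has the largest $\Gtheta$ among all arms that beat $i_{opt}$'s direct objective'' fully rigorous — in particular, being careful that ``first encountered in descending $\Gtheta$-order'' genuinely implies ``largest $\Gtheta$-value,'' which requires that ties in $\Gtheta$ occur with probability zero (true for continuous Gumbels) or are broken consistently, and that the stopping rule is exactly ``strictly exceeds $\dobj(i_{opt},\epsilon)$.'' Everything else is a short chain of inequalities driven by $\epsilon > 0$ and the monotonicity $R(i) > R(j) \iff i > j$.
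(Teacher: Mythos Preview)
Your proposal is correct and follows essentially the same approach as the paper: for each inequality you combine the defining relation $\dobj(\cdot,\epsilon)\ge\dobj(\cdot,\epsilon)$ with the reverse ordering on the Gumbel values to squeeze out $\epsilon R(\cdot)\ge\epsilon R(\cdot)$, then invoke $\epsilon>0$ and the monotone indexing of arms. Your justification of $\Gtheta(i_{approx})\ge\Gtheta(i_{direct})$ via ``$i_{approx}$ has the largest $\Gtheta$ among arms that strictly beat $i_{opt}$'' is the same idea the paper phrases contrapositively (``otherwise we would have encountered $i_{dir}$ first and stopped there''); the only cosmetic difference is that you separate out the edge case $i_{approx}=i_{opt}$ explicitly, whereas the paper leaves it implicit.
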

\begin{proof}
To prove $i_{approx} \geq i_{opt}$, observe that by definition we have $\dobj(i_{approx}, \epsilon) \geq \dobj(i_{opt}, \epsilon)$
and $\Gtheta(i_{opt}) \geq \Gtheta(i_{approx})$ 
This implies
\begin{align}
\Gtheta(i_{approx}) + \epsilon R(i_{approx}) & \geq \Gtheta(i_{opt}) + \epsilon R(i_{opt}) \\
\epsilon R(i_{approx}) - \epsilon R(i_{opt}) & \geq \Gtheta(i_{opt}) - \Gtheta(i_{approx}) \geq 0.
\end{align}
Thus $R(i_{approx}) \geq R(i_{opt})$ and $i_{approx} \geq i_{opt}$.

To prove $i_{dir} \geq i_{approx}$ observe that we must have $\Gtheta(i_{approx}) \geq \Gtheta(i_{dir})$, because otherwise we would have encountered $i_{dir}$ before $i_{approx}$ when iterating $i$'s, and because $\dobj(i_{dir}, \epsilon) \geq \dobj(i_{approx}, \epsilon)$ by definition, we would have chosen $i_{dir}$ as $i_{approx}$ when we encountered it.

So we have $\Gtheta(i_{approx}) - \Gtheta(i_{dir}) \geq 0$, which implies
\begin{align}
\Gtheta(i_{dir}) + \epsilon R(i_{dir}) & \geq \Gtheta(i_{approx}) + \epsilon R(i_{approx}) \\
\epsilon R(i_{dir}) - \epsilon R(i_{approx}) & \geq \Gtheta(i_{approx}) - \Gtheta(i_{dir}) \geq 0  \\
\end{align}
Thus $R(i_{dir}) \geq R(i_{approx})$ and $i_{dir} \geq i_{approx}$.
\end{proof}

\begin{lemma}
We're at a stationary point iff $i_{direct} = i_{opt}$ (or $i_{approx} = i_{opt}$) almost surely.
\end{lemma}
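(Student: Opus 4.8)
The plan is to reduce ``being at a stationary point'' to an equality of the marginal laws of the two sampled arms and then exploit that $R$ is strictly increasing. First I would compute the expected \ouralgshort{} update in this bandit setting. Since the policy is a softmax, $\nabla_{\theta_k}\log\modelp{i}=\mathbf{1}\{i=k\}-\modelp{k}$, so for any (random) arm $I$ we have $\mathbb{E}[\nabla_{\theta_k}\log\modelp{I}]=\Pr[I=k]-\modelp{k}$. Applying this to $I=i_{dir}$ and $I=i_{opt}$, the expected update --- equivalently $\nabla_\theta l(\theta,\epsilon)$, cf.\ \eqref{eq:dlpg_alt} --- is the vector $\tfrac1\epsilon\bigl(\Pr[i_{dir}=\cdot]-\Pr[i_{opt}=\cdot]\bigr)$. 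By the Gumbel-max trick $i_{opt}$ has law $\modelp{\cdot}$ and $i_{dir}$ has law $P_R(\cdot)\propto\modelp{\cdot}\exp(\epsilon R(\cdot))$, so being at a stationary point is equivalent to $P_R=\modelp{\cdot}$ as distributions over arms.

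Next I would prove both directions for $i_{dir}$. The ``if'' direction is immediate: if $i_{dir}=i_{opt}$ almost surely the two laws coincide and the update is identically zero. For ``only if'', suppose $P_R=\modelp{\cdot}$; then $\exp(\epsilon R(i))=\sum_j\modelp{j}\exp(\epsilon R(j))$ for every arm $i$ in the support of the policy, i.e.\ $\exp(\epsilon R(\cdot))$ is constant on that support. Since $\epsilon>0$ and $R$ is injective (it is strictly increasing in $i$), the support must be a single arm; the policy is deterministic, $i_{opt}$ equals that arm a.s., and because $P_R$ is then supported on the same single arm so is $i_{dir}$. This yields the ``$i_{dir}=i_{opt}$ a.s.'' characterization. (One caveat I would make explicit: a full-support softmax never satisfies $P_R=\modelp{\cdot}$ for finite $\theta$, so the statement is really a property of the limiting, deterministic policy; I would phrase it that way or work over the closure of the parameter set, consistent with how the lemma is used in the convergence argument.)

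Then I would transfer the characterization to $i_{approx}$ using the previous lemma's sandwich $i_{opt}\le i_{approx}\le i_{dir}$. The expected update of the $i_{approx}$ variant is $\tfrac1\epsilon\bigl(\Pr[i_{approx}=\cdot]-\modelp{\cdot}\bigr)$, which vanishes iff the law of $i_{approx}$ equals $\modelp{\cdot}$. If $i_{approx}=i_{opt}$ a.s.\ this clearly holds; conversely, if the laws agree then $\mathbb{E}[R(i_{approx})]=\mathbb{E}[R(i_{opt})]$ while $R(i_{approx})\ge R(i_{opt})$ pointwise (monotonicity of $R$ plus the sandwich), forcing $R(i_{approx})=R(i_{opt})$ a.s.\ and hence $i_{approx}=i_{opt}$ a.s.\ by injectivity. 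Finally, $i_{approx}=i_{opt}$ a.s.\ $\iff$ $i_{dir}=i_{opt}$ a.s.: the sandwich squeezes one direction, and for the other, $i_{approx}=i_{opt}$ means Algorithm~\ref{alg:dpg} enumerated all arms without finding $\dobj(i,\epsilon)>\dobj(i_{opt},\epsilon)$, so $i_{opt}\in\argmax_i\dobj(i,\epsilon)=\{i_{dir}\}$ a.s.\ (argmax ties have probability zero). Thus zero expected update, $i_{dir}=i_{opt}$ a.s., and $i_{approx}=i_{opt}$ a.s.\ are all equivalent.

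The main obstacle is not a calculation but two bookkeeping points: clarifying what ``stationary point'' should mean for a softmax policy (the condition only holds in the deterministic/boundary limit), and the measure-zero handling of argmax ties when passing between the exact maximizer $i_{dir}$ and the enumerated $i_{approx}$. Once those are pinned down, the rest is a short computation with Gumbel-max marginals and the monotonicity of $R$.
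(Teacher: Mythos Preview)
Your proposal is correct but takes a genuinely different route from the paper. The paper argues per realization: since the softmax score-function difference collapses to $\mathbf{e}_{i_{dir}}-\mathbf{e}_{i_{opt}}$ and the previous lemma forces $i_{dir}\ge i_{opt}$, every nonzero contribution increments a strictly larger index and decrements a smaller one; a short ``these cannot cancel'' argument (any cancellation would have to push the imbalance to an even more extreme index) then gives the contradiction. You instead pass to expectations, identify the two marginals as $P_R$ and $\modelp{\cdot}$ via Gumbel-max, and reduce stationarity to the equality $P_R=\modelp{\cdot}$. One remark: your route for $i_{dir}$ through ``$\exp(\epsilon R(\cdot))$ constant on the support $\Rightarrow$ singleton support'' is valid but unnecessarily strong, and it is precisely what forces you into the boundary caveat. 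You can sidestep it by reusing the same device you apply to $i_{approx}$: once the two marginals agree you have $\mathbb{E}[R(i_{dir})]=\mathbb{E}[R(i_{opt})]$, while the previous lemma gives $R(i_{dir})\ge R(i_{opt})$ pointwise, so $i_{dir}=i_{opt}$ a.s.\ with no appeal to determinism of the policy. Conversely, your treatment of $i_{approx}$ (equal marginals plus pointwise domination) is more explicit than the paper's one-line ``the same argument holds,'' and the final equivalence you establish between $i_{approx}=i_{opt}$ a.s.\ and $i_{dir}=i_{opt}$ a.s.\ is essentially the content of the paper's subsequent Proposition rather than this lemma.
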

\begin{proof}
In one direct, if $i_{direct} = i_{opt}$ almost surely, then \ouralgshort{} updates on 0 almost surely. In the other direction,
suppose for the sake of contradiction that there is some realization of $\Gtheta$ where $i_{direct}$ is not equal to $i_{opt}$.
By Lemma 1, $i_{direct} > i_{opt}$.
Then the gradient vector will have a positive entry for $\theta_{i_{direct}}$ and a negative entry for $\theta_{i_{opt}}$.
In order to be at a stationary point, other realizations of $\Gtheta$ need to cancel these contributions.
Because of Lemma 1, however, it is only possible to simultaneously decrement the gradient vector at $i$ and increment it at $j$ if $j > i$.
The only way to decrement the previously incremented entry for $i_{direct}$ would be to increment an even larger entry,
and the only way to increment the previously decremented entry for $i_{opt}$ would be to decrement an even smaller entry.
Thus, there is no way to cancel gradients if any entry is nonzero, and thus the only way to get a zero gradient is if $i_{direct} = i_{opt}$ for all realizations of $\Gtheta$. In Lemma 1 we have $i_{direct} \geq i_{approx} \geq i_{opt}$, so the same argument holds for $i_{approx}$.
\end{proof}

\begin{proposition}
The stationary points assuming exact optimization of $i_{direct}$ are the same as the stationary points assuming approximate optimization to get $i_{approx}$.
\end{proposition}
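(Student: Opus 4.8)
The plan is to obtain the Proposition as an essentially immediate consequence of Lemmas 1 and 2, so that the only real task is to reconcile the two stationarity characterizations that Lemma 2 bundles together. First I would record what Lemma 2 gives: for a fixed $\theta$, the exact procedure sits at a stationary point precisely when $i_{direct} = i_{opt}$ almost surely, and the approximate procedure sits at a stationary point precisely when $i_{approx} = i_{opt}$ almost surely. Hence it suffices to prove that, for every $\theta$, $\Pr(i_{direct} = i_{opt}) = 1$ if and only if $\Pr(i_{approx} = i_{opt}) = 1$; the Proposition then follows by identifying the two stationary-point sets with the two sets of such $\theta$.

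For the forward implication I would invoke the sandwich $i_{direct} \ge i_{approx} \ge i_{opt}$ from Lemma 1: on the almost-sure event where $i_{direct} = i_{opt}$, the integer $i_{approx}$ is squeezed between two equal values and must itself equal $i_{opt}$, so $i_{approx} = i_{opt}$ almost surely. For the reverse implication I would unfold the definition of $i_{approx}$: Algorithm~\ref{alg:dpg} with priority $\Gtheta$ returns $i_{approx} = i_{opt}$ only after enumerating every arm without encountering one with $\dobj(i, \epsilon) > \dobj(i_{opt}, \epsilon)$. Thus on the almost-sure event $\{i_{approx} = i_{opt}\}$ every arm satisfies $\dobj(i, \epsilon) \le \dobj(i_{opt}, \epsilon)$, so in particular $\dobj(i_{direct}, \epsilon) \le \dobj(i_{opt}, \epsilon)$; combined with $\dobj(i_{direct}, \epsilon) \ge \dobj(i_{opt}, \epsilon)$ (since $i_{direct}$ is the maximizer of the direct objective), this makes $i_{opt}$ also a maximizer of $\dobj(\cdot, \epsilon)$. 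Because the $\Gtheta(i)$ are continuous random variables, this maximizer is almost surely unique, whence $i_{direct} = i_{opt}$ almost surely. Chaining the two implications yields equality of the two stationary-point sets.

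I expect the argument to be short, because the substantive content — that the $\ouralgshort{}$ gradients at a point can only all vanish when $i_{direct} = i_{opt}$, which relies on the monotonicity $i_{direct} \ge i_{approx} \ge i_{opt}$ — has already been carried out in the proof of Lemma 2. The one place that warrants a little care is the reverse implication, where I must (i) correctly read ``$i_{approx}$ returned $i_{opt}$'' as the global statement that no arm beats $i_{opt}$ on the direct objective, and (ii) discard ties among the values $\dobj(\cdot, \epsilon)$ using absolute continuity of the Gumbel perturbations. Neither step looks like a genuine obstacle.
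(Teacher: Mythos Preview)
Your proposal is correct and follows essentially the same route as the paper's proof: the forward implication uses the Lemma~1 sandwich $i_{direct} \ge i_{approx} \ge i_{opt}$, and the reverse implication argues that $i_{approx} = i_{opt}$ forces an exhaustive enumeration with no improvement, hence $i_{opt}$ maximizes $\dobj(\cdot,\epsilon)$ and so $i_{direct} = i_{opt}$. You are slightly more careful than the paper in explicitly invoking the continuity of the Gumbels to rule out ties, but otherwise the arguments coincide.
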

\begin{proof}
By Lemma 2, all stationary points assuming exact optimization have $i_{direct} = i_{opt}$ for all realizations of $\Gtheta$.
By Lemma 1, in each of these realizations we have $i_{direct} \geq i_{approx} \geq i_{opt}$. 
Thus, for all realizations we have $i_{approx} = i_{opt}$ and thus we are at a stationary point assuming approximate search. In the other direction, Lemma 2 implies that all stationary points assuming approximate optimization have $i_{approx} = i_{opt}$ almost surely. The only way for this to happen is that in trying to find $i_{approx}$ we exhaustively iterated over all arms and found no improvement. Thus, $i_{direct}$ could not have been an improvement and $i_{direct} = i_{opt}$ almost surely.
\end{proof}
\section{Further Details on $A^\star$ sampling trajectories}
\label{app:further_astar_details}

Here we provide a more detailed version of \secref{sec:topdown}, 
which allows us to more precisely state the limitations of the original $A^\star$ sampling algorithm for RL, and how our algorithm fixes the problems.

\paragraph{Gumbel Processes.}
To evaluate $\dobj(\trajectory, \epsilon)$, which defines $\topt$ and $\tdirect$,
we need to sample a $\Gtheta(\trajectory)$ value for each complete trajectory encountered
during the search.
It is not possible to generate $\Gtheta(\trajectory)$ for each $\trajectory$ before starting the
search, because there may be exponentially (or even infinitely) many possible trajectories.
Another option would be to expand the search tree independently of $\Gtheta$ values and then sample
$\Gtheta(\trajectory)$ via \eqref{eq:G} for each singleton region encountered during the search.
This would produce $\Gtheta$ values with the right distribution, but it is also a non-starter
because we are precisely interested in biasing the search towards trajectories with large $\Gtheta$ values.

The solution to this problem comes from Maddison et al.
Instead of only assigning $\Gtheta$ values to trajectories, we also assign them to regions.
To assign random variables to overlapping regions in a
consistent way, Maddison et al.~introduce the \emph{Gumbel Process}.
A Gumbel process is defined in terms of a sample space $\Omega$ and measure $\mu$.
In our case, $\Omega = \actions^T$ is the set of all length $T$ trajectories and $\mu$ assigns probabilities
to any subset $\region \subseteq \actions^T$ as $\mu(\region \mid \staterewardtree) = \sum_{\trajectory \in \region} \modelP{\trajectory \mid \staterewardtree}$. 
A Gumbel Process is then defined as the set $\{G(\region) \mid \region \subseteq \Omega\}$ where the following
properties hold:
\begin{enumerate}
    \item $G(\region) \sim \Gumbel(\log \mu(\region))$,
    \item $\region_1 \cap \region_2 = \emptyset \implies G(\region_1) \perp G(\region_2)$,
    \item $G(\region_1 \cup \region_2) = \max(G(\region_1), G(\region_2))$.
\end{enumerate}
That is, (1) the $G$ values are marginally distributed as Gumbels with location given by the log
measure of the region, (2) the random variables for disjoint
regions are independent, and (3) the random variable in the union of two regions is equal to the
max of the random variables in the two regions.
A fourth property is implied by the first three, which we state in our context:
\begin{enumerate}
\setcounter{enumi}{3} 
\item $X(\region) = \argmax_{\trajectory \in \region} G(\trajectory) \sim 1\{ \trajectory \in \region\} \modelP{\trajectory \mid \staterewardtree}$.
\end{enumerate}
That is, the argmax trajectory $X(\region)$ in a region  is distributed according to $\modelP{\cdot \mid \staterewardtree}$ that is
masked out to only give support to $\region$.
Finally, an important property that comes from Gumbel distributions is that
$G(\region)$ and $X(\region)$ are independent random variables \cite{maddison2014astar}.
This means that we are free to interleave the sampling of $X$ and $G$ as we
please, and it will be leveraged in the algorithms in the following sections.

\paragraph{Top-Down Sampling.}
Conceptually, if we had sampled $\Gtheta(\trajectory)$ for all $\trajectory$, then the rest of the
Gumbel process would be determined by $\Gtheta(\region) = \max_{\trajectory \in \region} \Gtheta(\trajectory)$.
However, Maddison et al.~show that assuming $\mu$ is computable for all regions, a
Gumbel Process can be constructed lazily in a ``top-down'' fashion, first sampling $G(\Omega)$, and then
recursively subdividing regions $\region_0$ and sampling $G$'s for the child regions conditional upon the value
of $G(\region_0)$.
Specifically, they divide $\region_0$
into three disjoint regions: $\region_1, \region_2$, and $\{X(\region_0)\}$
such that $\region_0 = \region_1 \cup \region_2 \cup \{ X(\region_0) \}$.
They show that for $i \in \{1, 2\}$ the conditional distribution of $G(\region_i)$ given previous splits in the tree is $\TruncGumbel(\log \mu(\region_i), G(\region_0))$ and $G(\{X(\region_0)\}) = G(\region_0)$.

Under our choice of regions, $\mu(\region \mid \staterewardtree) = \sum_{\trajectory \in \region} \modelP{\trajectory \mid \staterewardtree}$ 
can indeed be computed efficiently as
\begin{align}
    \modelP{\region(\prefix, \legalactions; \staterewardtree) \mid \staterewardtree} =
    & \left( \prod_{t'=0}^{t-1} \modelp{a_{t'} \mid s_{(a_0, \ldots, a_{t'-1})}} \right) \sum_{a \in \legalactions} \modelp{a \mid s_{\prefix}}.
\end{align}

$\legalactions$ is the set of actions that can be taken after the prefix $\prefix$. 

If all prefixes eventually terminate with probability 1, then it is possible 
to apply one step of Top-Down Sampling to sample trajectories.
To split a region $\region_0 = \region(\prefix, \legalactions)$, we would sample
$X(\region_0) \sim 1\{\trajectory \in \region_0\} \modelp{\trajectory \mid s_{\prefix}}$.
This is straightforward because it is essentially conditioning on a prefix in
an autoregressive model. Specifically, start with $\prefix$,
sample $a_t \sim 1\{a_t \in \legalactions\} \modelp{a_t \mid s_{\prefix}}$,
and then sample a completion according to 
\begin{align}
    \prod_{t' = t+1}^T \modelp{a_{t'} \mid s_{(a_0, \ldots, a_{t'-1})}}
\end{align}

However, recursing would be problematic because we do not have a way of splitting
$\region_0 \backslash \{X(\region_0)\}$ into two regions that can compactly be
represented as a prefix plus legal set of next actions.
To address a similar issue, Kim et al.~propose a modified split criteria that
divides a region $\region_0$ into two regions. Roughly the idea is to group together
$\region_1 \cup \{X(\region_0)\}$ from above into one region, and $\region_2$ as the other region.

Applying the idea to our setting (which is slightly different because we support $|\actions| > 2$), 
to split a region $\region_0 = \region(\prefix, \legalactions)$, 
we assume inductively that we have already sampled $G(\region_0)$ and $X(\region_0)$.
Let prefix $\prefix$ have $t$ states and $X(\region_0) = (a_0, \ldots, a_{t-1})$.
Note that $X(\region_0) \in \region_0$ by definition, so $\prefix$ is a prefix of $X(\region_0)$
and $a_{t} \in \legalactions$.
We can then define $\region_1 = \region(\prefix \oplus a_t, \actions)$
and $\region_2 = \region(\prefix, \legalactions \backslash \{a_t\})$.
We then need $G$ and $X$ for the new regions.
First, $X(\region_0) \in \region_1$, so it must be the case that it continues to be the 
argmax when considering a smaller region. Thus $\region_1$ ``inherits'' the parent's max and argmax: 
$G(\region_1) = G(\region_0)$ and $X(\region_1) = X(\region_0)$.
Creating a child region that does not contain the parent argmax follows the same logic as in standard
Top-Down sampling: $G(\region_2) \sim \TruncGumbel(\log \mu(\region_2), G(\region_0))$, and
we can sample $X(\region_2) \sim 1\{\trajectory \in \region_2\} \modelp{\trajectory \mid s_{\prefix}}$ as described
in the previous subsection.

\paragraph{Top-Down Sampling Trajectories.}

Adapting the search space structure from Kim et al.~makes it practical to implement
Top-Down sampling for trajectories. However, the algorithm is wasteful in its interactions
with the environment, particularly if trajectories can be long, because $X(\region)$
is instantiated fully for each region that is put on the queue.
This would also prevent applying the algorithm at all if trajectories are of infinite length.
We develop a further modification that addresses these issues.

Our idea is to use a similar search space as Kim et al.~but to lazily sample $X(\region)$.
The key observation is that the full value of $X(\region)$ is never used when splitting regions.
Paired with the fact that maxes and argmaxes are independent, this means that we are free to only
maintain prefixes of $X(\region)$ and sample extensions when they are needed.
Using the same notation as above, we just need samples of the next action $a_t$ to define the split.
In fact, we can do away with explicitly maintaining $X$'s in the algorithm altogether.
They can be recovered when we encounter a singleton region as the only trajectory in the region.
The resulting algorithm is our Modified Top-Down algorithm and appears in Algorithm \ref{alg:topdown}. 


\section{Additional Experimental Details}

\subsection{Combinatorial Bandits}
\label{app:combinatorial_bandits}
\ouralgshort~interacts with an environment to construct spanning trees as a sequence of binary decisions about whether to include each edge.
The environment provides a set of legal actions at each step.
If adding an edge would create a cycle, the only legal action is to not add the edge.
If there are $k$ steps left and only $n - k - 1$ edges so far, the only legal action is to add the edge.
If there is only one legal action, we take it with probability 1.
While this reduces the chance of the agent generating an invalid tree, 
it is possible to generate an invalid spanning tree,
in which case we continue searching over trajectories in descending order of $\Gtheta(\trajectory)$ until finding a valid tree.
The first valid tree found is returned as the agent's predicted tree.
The baseline methods always generate valid spanning trees.
Thus, this ensures that the algorithms are not being evaluated in terms of how quickly they learn to generate valid spanning trees. They are all evaluated in terms of how quickly they learn to generate spanning trees with high reward.

As baselines, we use a privileged "semi-bandit" version of UCB that observes per-edge rewards
and a version that assumes the per-tree rewards are attributed evenly to the edges, i.e., $r_e = \frac{r_{\mathcal{T}}}{n - 1}$.
Both baselines choose a tree at time $t$ by computing a maximum spanning tree given upper confidence bound edge costs $u_e = \hat \mu_e + \frac{1.5 \log t}{c_e}$
where $\hat \mu_e$ is the average per-edge reward for edge $e$ and $c_e$ is the number of times edge $e$ has been chosen. 

\subsection{DeepSea}

The policy model is a linear layer which gets as input one-hot vector of size 5x5 and outputs log probability for each action [FC(number of states, number of actions)]. We used Adam optimizer with a learning rate of 0.001 

\subsection{Minigrid}
\label{app:minigrid}

The observations are provided as a tensor of shape 7x7x3. Each of the $7 \times 7$ tiles is encoded using 3 integer values: one describing the type of object contained in the cell, one describing its color, and a flag indicating whether doors are open or closed. In addition, the agent's orientation is also provided as one-hot vector of size 4. 
 
 The policy model consists of 3 convolutional layers and one linear layer on top of them. $Conv1(3,32) \rightarrow ReLU \rightarrow Conv2(32,48) \rightarrow ReLU \rightarrow Conv3(48,64) $. The linear layer gets as input a concatenation of orientation vector and the output of the convolutional layers, namely $FC(64+4,7)$. The output of the linear layer is the log-probabilities of possible action.  We used Adam optimizer with a learning rate of 0.001. We used the same architecture for our algorithm and the baselines. 

We trained the model for 9M iterations, with a maximum of 3000 iterations per episode. In our algorithm we used the interactions budget for searching for direct candidates. In REINFORCE and cross-entropy method algorithms we used the interactions budget to sample 30 independent trajectories (100 steps trajectories) while we used the simulator to reset the environment.
For REINFORCE we averaged the gradients of the 30 trajectories before updating the policy model. For the cross-entropy method we averaged $\nabla_\theta \log \modelP{\trajectory \mid \staterewardtree}$ over the best $2$ out of $30$ trajectories. The results shown in \figref{fig:minigrid1} are an average of 5 trials with different random seeds.


We consider two versions of REINFORCE algorithm. The first is the standard trajectory-level  $\nabla\expect_{\trajectorytriple \sim p_{\theta}}\left[ \sum_{t=0}^{T-1} r_t \right] = \sum_{t=0}^{T-1}\nabla_\theta \log \modelp{a_t \mid s_t}\sum_{i=0}^{T-1}r_i$. However, the variance of the trajectory-level is high. The other version is an action-level which consider only the future rewards and serves as a variance reduction technique  $\nabla\expect_{\trajectorytriple \sim p_{\theta}}\left[ \sum_{t=0}^{T-1} r_t - b \right] = \sum_{t=0}^{T-1}\nabla_\theta \log \modelp{a_t \mid s_t}\sum_{i=t}^{T-1}r_i-b$ where the baseline $b$ is the average of the rewards over all time steps.


\end{document}